\newtheorem{theorem}{Theorem}
\newtheorem{proposition}{Proposition}
\newtheorem{lemma}{Lemma}
\begin{document}

%

%
\runningauthor{Benjamin Letham, Phillip Guan, Chase Tymms, Eytan Bakshy, Michael Shvartsman}

\twocolumn[

\aistatstitle{Look-Ahead Acquisition Functions for Bernoulli Level Set Estimation}


\aistatsauthor{ Benjamin Letham \And Phillip Guan \And  Chase Tymms}
\aistatsaddress{ Meta\\bletham@fb.com \And Reality Labs Research, Meta\\philguan@fb.com \And Reality Labs Research, Meta\\tymms@fb.com}
\vspace{-15pt}
\aistatsauthor{ Eytan Bakshy \And Michael Shvartsman}
\aistatsaddress{ Meta\\ebakshy@fb.com \And Reality Labs Research, Meta\\michael.shvartsman@fb.com}

]

\begin{abstract}
Level set estimation (LSE) is the problem of identifying regions where an unknown function takes values above or below a specified threshold. Active sampling strategies for efficient LSE have primarily been studied in continuous-valued functions. Motivated by applications in human psychophysics where common experimental designs produce binary responses, we study LSE active sampling with Bernoulli outcomes. With Gaussian process classification surrogate models, the look-ahead model posteriors used by state-of-the-art continuous-output methods are intractable. However, we derive analytic expressions for look-ahead posteriors of sublevel set membership, and show how these lead to analytic expressions for a class of look-ahead LSE acquisition functions, including information-based methods. Benchmark experiments show the importance of considering the \textit{global} look-ahead impact on the entire posterior. We demonstrate a clear benefit to using this new class of acquisition functions on benchmark problems, and on a challenging real-world task of estimating a high-dimensional contrast sensitivity function.
\end{abstract}


\section{INTRODUCTION}
The \textit{level set estimation} (LSE) problem is to identify the regions where a black-box function $f(\mathbf{x})$ is above or below a particular threshold $\gamma$. Applications include environmental monitoring (\textit{e.g.} identifying areas where a contaminant is at a hazardous level, \citealp{lse}), communications (\textit{e.g.} finding wireless network configurations with acceptable signal quality, \citealp{lse_comms}), and finance (\textit{e.g.} for derivative pricing, \citealp{lyu21}). Evaluating $f(\mathbf{x})$ in these settings entails a time-consuming physical measurement or computer simulation, so the goal is to identify the level set with as few samples as possible.

Sample-efficient LSE is an active learning problem \citep{kapoor07, hoang14}, and is related to techniques like Bayesian active learning by disagreement \citep[BALD,][]{Houlsby2011} that use a surrogate model to perform active sampling. The function $f(\mathbf{x})$ is modeled with a Gaussian process (GP) surrogate, and then active sampling is driven by an acquisition function that selects the most valuable point to sample for identifying the level set. Several acquisition functions have been developed for LSE, as described in Section \ref{sec:background}, primarily for continuous-output functions with Gaussian noise.

The motivation for this work comes from psychophysics, a field of science that seeks to understand perception of physical stimuli \citep{fechner1966elements}. Understanding and adjusting for the limitations of human perception is important for a variety of downstream applications including audio/visual compression \citep{pappas1996supra, nadenau2000human}, hearing aid design \citep{moore1996perceptual}, clinical evaluation of auditory and visual impairments \citep{de1975clinical, fitzke1988clinical}, and the design of virtual and augmented reality systems \citep{kress2020optical}.

A common task in psychophysics is to identify \textit{detection thresholds}, the smallest stimulus intensity (\textit{e.g.}, volume of a sound) at which the stimulus can be perceived, usually as a function of other stimulus properties (\textit{e.g.}, frequency). Finding detection thresholds is an LSE problem for the threshold level, notably with Bernoulli observations $y \in \{0, 1\}$ indicating whether or not a stimulus was perceived. GP surrogate models and active sampling methods have been applied to psychophysics threshold estimation problems with Bernoulli responses \citep{Gardner2015a, song2015, Song2017b, Song2018, cox2016bayesian, Schlittenlacher2018, Schlittenlacher2020, owen2021}. This prior work has been limited to a 1-d or 2-d stimulus space of audiograms and, in contrast to our work here, has not used LSE acquisition functions, rather has used global learning methods such as BALD and then extracted the level set \textit{post hoc} from the surrogate posterior. As a notable exception, \citet{owen2021} used the straddle acquisition, which is described below.

\textit{Look-ahead} acquisition functions select points based on the impact their observation will have on the subsequent surrogate model. They are the state-of-the-art approach for LSE \citep{truvar, lyu21, nguyen21}, as well as for Bayesian optimization (BO) \citep{scott11kg, hernandezlobato2014pes, wang2017maxvalue, botorch}, which is related to LSE by the use of GP surrogates and acquisition functions. They have additionally been shown to be useful in the psychophysics setting, albeit with a parametric model and only in one dimension \citep{Kim2017}. 

Look-ahead acquisition functions rely on being able to compute a posterior update given the proposed observation, which can be done analytically with Gaussian observations. Unfortunately, with Bernoulli observations the surrogate model posterior updates are no longer analytic. This is vital for acquisition optimization, which requires computing many thousands of look-ahead posteriors throughout the course of active sampling.

Our work here enables tractable look-ahead acquisition for the Bernoulli LSE problem. Specifically, the contributions of this paper are:
\begin{enumerate}[leftmargin=*]
    \setlength\itemsep{0em}
    \item Despite the look-ahead surrogate model posterior being intractable with Bernoulli observations, we derive posterior update formulae that, remarkably, enable exact, closed-form computation of several state-of-the-art look-ahead acquisition functions, including information-based approaches.
    \item Our look-ahead posterior formulae enable easy construction of novel acquisition functions, which we show by introducing \textit{expected absolute volume change} (EAVC), a new acquisition function inspired by max-value entropy search in BO. 
    \item We evaluate the acquisition functions with a thorough simulation study, which shows that look-ahead is critical for achieving good level set estimates in high dimensions. We also show that simply being look-ahead is not enough to ensure reliable performance---the acquisition function must also be \textit{global}, a distinction we discuss in Section \ref{sec:background}.
    \item Our work enables rapid acquisition function computation that is suitable for human experiments, which we show by applying our global look-ahead acquisition functions to a real, high-dimensional psychophysics problem.
\end{enumerate}

\section{BACKGROUND}\label{sec:background}

\subsection{Models for Level Set Estimation}
We consider a black-box function $f(\mathbf{x})$, with $\mathbf{x} \in \mathcal{B}$ and $\mathcal{B} \subset \mathbb{R}^d$ a compact set. Our goal is to identify the set $\mathcal{L}_{\gamma}(f) = \{\bm{x}: f(\bm{x}) \leq \gamma\}$, known as the \textit{sublevel set}. When $f$ has continuous outputs, it is typical to assume a Gaussian observation model $y = f(\mathbf{x}) + \epsilon$, with
$\epsilon$ i.i.d. Gaussian noise. We give $f$ a GP prior, then, given a set of observations $\mathcal{D}_n = \{(\bm{x}^\textrm{obs}_i, y^\textrm{obs}_i)\}_{i=1}^n$, the joint posterior
for any set of points is a multivariate normal (MVN) with analytic mean and covariance \citep{williams2006gaussian}. We denote the marginal posterior at $\mathbf{x}$ as $f(\mathbf{x}) | \mathcal{D}_n \sim \mathcal{N}(\mu(\mathbf{x} | \mathcal{D}_n), \sigma^2(\mathbf{x}|\mathcal{D}_n))$.

With Bernoulli observations $y \in \{0, 1\}$, the standard practice is to use a classification GP based on either a logit or probit model \citep{kuss2005}. Here we focus on the probit case, in which $y \sim \textrm{Bernoulli}(z(\mathbf{x}))$, with latent probability $z(\mathbf{x}) = \Phi(f(\mathbf{x}))$ and $\Phi(\cdot)$ denoting the Gaussian cumulative distribution function. With Bernoulli observations, the predictive posterior for $f | \mathcal{D}_n$ requires approximation, but a variety of efficient approximations have been developed, including Laplace approximation \citep{gpclass_lp}, expectation propagation \citep{gpclass_ep}, and variational inference \citep{gpclass_vi}. Our experiments use variational inference, but the acquisition functions we develop are agnostic to how inference is done---for our purposes it is sufficient to have an MVN posterior for $f$.

We define $\theta = \Phi(\gamma)$ to be the desired threshold for the probability function $z(\mathbf{x})$. Then, $\mathcal{L}_{\gamma}(f) = \mathcal{L}_{\theta}(z)$, and so LSE  for $z(\mathbf{x})$ can equivalently be done directly on $z(\mathbf{x})$ or in the latent space of $f(\mathbf{x})$. An important quantity for the acquisition functions described below is $\pi(\mathbf{x} | \mathcal{D}_n) = \mathbb{P}(\mathbf{x} \in \mathcal{L}_{\gamma}(f) | \mathcal{D}_n)$, which we call the \textit{level set posterior}. Given the GP posterior, we can compute the level set posterior as
\begin{equation}\label{eq:levelsetpost}
    \pi(\mathbf{x} | \mathcal{D}_n) = \Phi \left( \frac{\gamma - \mu(\mathbf{x} | \mathcal{D}_n)}{\sigma(\mathbf{x} | \mathcal{D}_n)} \right).
\end{equation}

\subsection{Acquisition for Level Set Estimation}

\subsubsection{Non-look-ahead Acquisition}
During active sampling, at each iteration we select a maximizer of the acquisition function to be the next point sampled.
General purpose active sampling strategies such as BALD seek to reduce global uncertainty in the posterior of $f$ or $z$, which can waste samples by reducing variance in regions that are far from the threshold, the main area of interest for LSE.

\citet{bryan2006} developed the first acquisition functions tailored for LSE, the most successful of which was the \textit{straddle}, which is of similar flavor to the well-known upper confidence bound (UCB) acquisition function \citep{gpucb}:
\begin{equation*}
    \alpha_{\textrm{straddle}}(\mathbf{x}_*) = -|\mu(\mathbf{x}_* | \mathcal{D}_n) - \gamma | + \beta \sigma(\mathbf{x}_* | \mathcal{D}_n).
\end{equation*}
As in UCB, the parameter $\beta$ drives exploration by selecting higher variance points; \citet{bryan2006} used $\beta = 1.96$. They also considered as acquisition functions the misclassification probability and the entropy of the level set posterior:
\begin{align*}
    \alpha_{\textrm{misclass}}(\mathbf{x}_*) &= \min(\pi(\mathbf{x}_* | \mathcal{D}_n), 1-\pi(\mathbf{x}_* | \mathcal{D}_n)), \\
    \alpha_{\textrm{entropy}}(\mathbf{x}_*) &= H_b(\pi(\mathbf{x}_* | \mathcal{D}_n)),
\end{align*}
where $H_b(p) = -p \log_2 p - (1 - p) \log_2 (1 - p)$ is the binary entropy function. They found that the straddle acquisition performed best, though noted that all of these acquisition functions are ``subject to oversampling edge positions." 
\cite{lse} provided theoretical grounding for LSE  and proved sample complexity bounds for extensions of the straddle.
\citet{ranjan08} adapted the Expected Improvement criterion from BO \citep{jones98} by defining an improvement function with respect to the threshold.

\subsection{Look-ahead Acquisition}\label{sec:lookahead}
Subsequent development of acquisition functions for LSE  have focused on look-ahead approaches, which consider not just the posterior at the candidate point $\mathbf{x}_*$, but how the posterior at a different point $\mathbf{x}_q$ will change as a result of an observation at $\mathbf{x}_*$. We denote the look-ahead dataset as $\mathcal{D}_{n+1}(\mathbf{x}_*, y_*) = \mathcal{D}_n \cup \{(\mathbf{x}_*, y_*)\}$, and note that $\mathcal{D}_{n+1}(\mathbf{x}_*, y_*)$ is a random variable via $y_*$. Much of the past work in look-ahead acquisition has relied on the useful GP property that, \textit{with Gaussian observations}, the look-ahead variance $\sigma^2(\mathbf{x} | \mathcal{D}_{n+1}(\mathbf{x}_*, y_*))$ does not depend on $y_*$, and can be computed analytically.

\citet{picheny10} introduced the first look-ahead acquisition function for LSE , \textit{targeted integrated mean squared error} (tIMSE), which minimizes look-ahead posterior variance, weighted according to distance to the threshold by some function $w_{\gamma}(\mathbf{x})$:
\begin{align}\nonumber
    \alpha_{\textrm{tIMSE}}(&\mathbf{x_*}) = - \int_{\mathcal{B}}  \sigma^2(\mathbf{x} | \mathcal{D}_{n+1}(\mathbf{x}_*, y_*)) w_{\gamma}(\mathbf{x}) d\mathbf{x}\\\label{eq:timse}
    &\approx - C \sum_{\mathbf{x}_q \in \mathcal{G}} \sigma^2(\mathbf{x}_q | \mathcal{D}_{n+1}(\mathbf{x}_*, y_*)) w_{\gamma}(\mathbf{x}_q).
\end{align}
This is an example of a \textit{global} look-ahead acquisition function that evaluates the impact of observing $\mathbf{x_*}$ on the entire design space, using quasi-Monte Carlo (qMC) integration \citep{caflisch98}. Here $\mathcal{G}$ is a quasi-random sequence and $C = \frac{\textrm{Vol}(\mathcal{B})}{|\mathcal{G}|}$ is a constant that can be ignored for the purpose of acquisition optimization. The tIMSE formula in (\ref{eq:timse}) is analytic due to the analytic form of $\sigma^2(\mathbf{x}_q | \mathcal{D}_{n+1}(\mathbf{x}_*, y_*))$, and so can be cheaply evaluated in batch across a large set of global reference points $\mathcal{G}$. \citet{truvar} used a similar approach by minimizing the total look-ahead posterior variance of the region that was not yet decidedly classified as above or below threshold. \citet{zanette18} utilized the analytic look-ahead posterior to construct an expected improvement-based criterion.

An important class of methods are of the form
\begin{equation}\label{eq:general}
    \alpha(\mathbf{x_*}) = Q(\mathcal{D}_n) - \mathbb{E}_{y_*} [Q( \mathcal{D}_{n+1}(\mathbf{x}_*, y_*))]
\end{equation}
where $Q(\mathcal{D}_n)$ is a cost function applied to the surrogate model posterior, and $Q( \mathcal{D}_{n+1}(\mathbf{x}_*, y_*))$ is that same cost function applied to the look-ahead posterior. \textit{Stepwise uncertainty reduction} \citep[SUR,][]{bect12, chevalier14} uses expected classification error as the cost function, and thus maximizes the expected look-ahead misclassification error reduction:
\begin{align}\label{eq:QGlobalSUR}
    Q_\textrm{GlobalSUR}(\mathcal{D}_n) &= \sum_{\mathbf{x}_q \in \mathcal{G}} \min ( \pi(\mathbf{x}_q | \mathcal{D}_n), 1- \pi(\mathbf{x}_q | \mathcal{D}_n )) \\\label{eq:QGlobalSUR2}
    &= \sum_{\mathbf{x}_q \in \mathcal{G}}  \Phi \left( - \frac{|\mu(\mathbf{x}_q|\mathcal{D}_n) - \gamma|}{\sigma(\mathbf{x}_q|\mathcal{D}_n)} \right),
\end{align}
where the global impact is again being estimated via a sum over a quasi-random sequence, as in (\ref{eq:timse}). $Q_\textrm{GlobalSUR}(\mathcal{D}_{n+1}(\mathbf{x}_*, y_*))$ is computed in the same manner using the look-ahead posterior, so when (\ref{eq:QGlobalSUR2}) is plugged into (\ref{eq:general}), the acquisition function is analytic.

Acquisitions of the form (\ref{eq:general}) can be formed either as a global look-ahead, as with GlobalSUR, or as a \textit{localized} version that considers the look-ahead impact of observing $\mathbf{x}_*$ just on $\mathbf{x}_*$:
\begin{equation}\label{eq:QLocalSUR}
    Q_{\textrm{LocalSUR}}(\mathcal{D}_n) = \min ( \pi(\mathbf{x}_* | \mathcal{D}_n), 1- \pi(\mathbf{x}_* | \mathcal{D}_n )).
\end{equation}
\citet{lyu21} call this method ``gradient SUR." It avoids the summation over the global reference set $\mathcal{G}$ required to compute GlobalSUR, but provides a less direct measurement of the total value of observing $\mathbf{x}_*$. 

\citet{azzimonti21} applied the SUR strategy to the volume of misclassified points, via the Vorob'ev expectation, with particular focus on active learning of conservative level set estimates that seek to control the type I error rate. In Section \ref{sec:eavc} we also use measures of level set volume to construct an acquisition function, though with a different approach that does not control error rates and is not a SUR strategy.

Global information gain has long been a target for level set acquisition. \citet{bryan2006} wrote of their acquisition functions, ``we believe the good performance of the evaluation metrics proposed below stems from their being heuristic proxies for global information gain." \citet{nguyen21} constructed a localized mutual information (MI) acquisition function by taking
\begin{equation}\label{eq:QLocalMI}
    Q_{\textrm{LocalMI}}(\mathcal{D}_n) = H_b( \pi(\mathbf{x}_* | \mathcal{D}_n) ) 
\end{equation}
in the acquisition form (\ref{eq:general}), and called this strategy binary entropy search (BES). As with SUR, computation of the look-ahead term $Q_\textrm{LocalMI}(\mathcal{D}_{n+1}(\mathbf{x}_*, y_*))$ has hitherto relied on the analytic look-ahead posteriors that exist only for Gaussian observations. A criterion for global information gain naturally parallels GlobalSUR:
\begin{equation}\label{eq:QGlobalMI}
    Q_{\textrm{GlobalMI}}(\mathcal{D}_n) = \sum_{\mathbf{x}_q \in \mathcal{G}}  H_b( \pi(\mathbf{x}_q | \mathcal{D}_n) ). 
\end{equation}

The only work that has applied LSE acquisition functions to GP classification surrogates is that of \citet{lyu21}, who studied LSE  under high levels of heavy-tailed noise. For the Bernoulli likelihood, they derived an approximation for the intractable look-ahead variance $\sigma^2(\mathbf{x} | \mathcal{D}_{n+1}(\mathbf{x}_*, y_*))$ based on plug-in estimates, and further approximated $\mu(\mathbf{x} | \mathcal{D}_{n+1}(\mathbf{x}_*, y_*)) \approx \mu(\mathbf{x} | \mathcal{D}_{n})$, which enabled approximate SUR computation via (\ref{eq:QGlobalSUR2}).

\section{BERNOULLI LOOK-AHEAD POSTERIOR UPDATES}

With Bernoulli observations, the latent look-ahead posterior $p(f | \mathcal{D}_{n+1}(\mathbf{x}_*, y_*))$ is not analytic. However, other quantities important for computing acquisition functions are. The moments of the posterior $p(z | \mathcal{D}_n)$ are analytic:

\begin{proposition}\label{prop:zpost}
Let $a_* = \frac{\mu(\mathbf{x}_* | \mathcal{D}_n)}{\sqrt{1 + \sigma^2(\mathbf{x}_* | \mathcal{D}_n)}}$ and $c_* = \frac{1}{\sqrt{1 + 2\sigma^2(\mathbf{x}_* | \mathcal{D}_n)}}$. Then,
\begin{align*}
    \mathbb{E}[z(\mathbf{x}_*) | \mathcal{D}_n] &= \Phi (a_*),\\
    \textrm{Var}[z(\mathbf{x}_*) | \mathcal{D}_n] &= \Phi (a_*) - \Phi (a_*)^2 - 2T\left(a_*, c_* \right).
\end{align*}
\end{proposition}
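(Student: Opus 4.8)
The plan is to use the probit link $z(\mathbf{x}_*)=\Phi(f(\mathbf{x}_*))$ together with the Gaussian marginal $f(\mathbf{x}_*)\mid\mathcal{D}_n\sim\mathcal{N}(\mu,\sigma^2)$ (abbreviating $\mu=\mu(\mathbf{x}_*\mid\mathcal{D}_n)$ and $\sigma^2=\sigma^2(\mathbf{x}_*\mid\mathcal{D}_n)$), and to reduce both moments to orthant probabilities of auxiliary Gaussians. The key device is the representation of $\Phi$ as a probability: if $W\sim\mathcal{N}(0,1)$ is independent of $f(\mathbf{x}_*)$, then $\Phi(f(\mathbf{x}_*))=\mathbb{P}(W\le f(\mathbf{x}_*)\mid f(\mathbf{x}_*))$.

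For the mean, taking expectations of this identity and using that $W-f(\mathbf{x}_*)\mid\mathcal{D}_n\sim\mathcal{N}(-\mu,\,1+\sigma^2)$ gives $\mathbb{E}[z(\mathbf{x}_*)\mid\mathcal{D}_n]=\mathbb{P}\big(W-f(\mathbf{x}_*)\le 0\mid\mathcal{D}_n\big)=\Phi\!\big(\mu/\sqrt{1+\sigma^2}\big)=\Phi(a_*)$, which is the classical result for the probit posterior predictive.

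For the second moment, introduce two i.i.d.\ $W_1,W_2\sim\mathcal{N}(0,1)$ independent of $f(\mathbf{x}_*)$ and write $\Phi(f(\mathbf{x}_*))^2=\mathbb{P}(W_1\le f(\mathbf{x}_*),\,W_2\le f(\mathbf{x}_*)\mid f(\mathbf{x}_*))$. Then $\mathbb{E}[z(\mathbf{x}_*)^2\mid\mathcal{D}_n]$ is the probability that the bivariate Gaussian vector $\big(W_1-f(\mathbf{x}_*),\,W_2-f(\mathbf{x}_*)\big)$ — which has mean $(-\mu,-\mu)$, common variance $1+\sigma^2$, and covariance $\mathrm{Cov}=\sigma^2$ — is coordinatewise $\le 0$. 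After standardizing, this equals $\Phi_2(a_*,a_*;\rho)$, the standard bivariate normal CDF at $(a_*,a_*)$ with correlation $\rho=\sigma^2/(1+\sigma^2)$. Finally I invoke the classical identity $\Phi_2(h,h;\rho)=\Phi(h)-2T\!\big(h,\sqrt{(1-\rho)/(1+\rho)}\big)$ relating the bivariate normal CDF on the diagonal to Owen's $T$; a short computation gives $\sqrt{(1-\rho)/(1+\rho)}=1/\sqrt{1+2\sigma^2}=c_*$, so $\mathbb{E}[z(\mathbf{x}_*)^2\mid\mathcal{D}_n]=\Phi(a_*)-2T(a_*,c_*)$. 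Subtracting $\mathbb{E}[z(\mathbf{x}_*)\mid\mathcal{D}_n]^2=\Phi(a_*)^2$ yields the stated variance.

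The routine parts are the Gaussian convolution for the mean and the covariance bookkeeping for the auxiliary bivariate vector. The one step doing real work is the reduction of $\Phi_2(h,h;\rho)$ to a single Owen's $T$ term: I would either cite it (Owen, 1956/1980) or derive it from the defining integral of $T$, and I would sanity-check it at $\rho=0$ and $\rho=1$ (where it must collapse to $\Phi(h)^2$ and $\Phi(h)$, respectively) to confirm the argument conventions of $T$ and hence the precise form of $c_*$.
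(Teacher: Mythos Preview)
Your argument is correct. The auxiliary-variable trick gives the right bivariate orthant probability $\Phi_2(a_*,a_*;\rho)$ with $\rho=\sigma^2/(1+\sigma^2)$, and the diagonal identity $\Phi_2(h,h;\rho)=\Phi(h)-2T\!\big(h,\sqrt{(1-\rho)/(1+\rho)}\big)$ together with $\sqrt{(1-\rho)/(1+\rho)}=1/\sqrt{1+2\sigma^2}$ lands exactly on the stated variance.

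The route differs from the paper's. The paper proceeds by a change of variables to standardize the Gaussian and then directly invokes two entries from Owen's compendium of Gaussian integrals, namely $\int\Phi(a+bx)\phi(x)\,dx=\Phi\!\big(a/\sqrt{1+b^2}\big)$ and $\int\Phi(a+bx)^2\phi(x)\,dx=\Phi\!\big(a/\sqrt{1+b^2}\big)-2T\!\big(a/\sqrt{1+b^2},\,1/\sqrt{1+2b^2}\big)$, with $a=\mu_*$ and $b=\sigma_*$. Your approach instead introduces independent standard normals $W_1,W_2$ to rewrite $\Phi(f_*)^2$ as a conditional joint probability, takes the unconditional expectation to obtain a bivariate normal orthant probability, and only then appeals to an Owen-type identity linking $\Phi_2$ on the diagonal to $T$. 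The two are equivalent---Owen's second integral identity is precisely what your probabilistic reduction proves---but yours is more self-contained and makes transparent \emph{why} Owen's $T$ appears (as the nontrivial piece of a symmetric bivariate orthant), while the paper's is shorter if one is willing to cite the tabulated integrals. Your sanity checks at $\rho\in\{0,1\}$ are a nice touch to pin down the argument conventions of $T$.
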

Here $T(\cdot, \cdot)$ is Owen's T function, which can be computed efficiently \citep{owens_t} and is available in SciPy \citep{scipy}. The formula for the mean is well-known and used in several applications of classification GPs \citep[\textit{e.g.}][]{Houlsby2011}; the variance is derived in the supplement. This result enables computing the straddle acquisition on the posterior of $z$ instead of that of $f$, which accounts for the variance squashing of the $\Phi(\cdot)$ transformation. However, our primary interest lies in look-ahead posteriors, and $p(z | \mathcal{D}_{n+1}(\mathbf{x}_*, y_*))$ is intractable.

Our main result is that despite the look-ahead posteriors for $f$ and $z$ being intractable, the look-ahead level set posterior $\pi(\mathbf{x}_q | \mathcal{D}_{n+1}(\mathbf{x}_*, y_*))$ is analytic, in terms of $\Phi(\cdot)$ and $\textrm{BvN}(\cdot, \cdot; \rho)$, which denotes the standard (zero-mean, unit-variance) bivariate normal distribution function with correlation $\rho$.

\begin{theorem}\label{thm:post}
Let $b_q = \frac{\gamma - \mu(\mathbf{x}_q|\mathcal{D}_n)}{\sigma(\mathbf{x}_q|\mathcal{D}_n)}$,  $\sigma(\mathbf{x}_q, \mathbf{x}_* | \mathcal{D}_n) = \textrm{Cov}[f(\mathbf{x}_q), f(\mathbf{x}_*) | \mathcal{D}_n]$, and
\begin{equation*}
    Z_{q*} = \textrm{BvN} \left( a_*  ,  b_q; \frac{- \sigma(\mathbf{x}_q, \mathbf{x}_* | \mathcal{D}_n)}{\sigma(\mathbf{x}_q|\mathcal{D}_n) \sqrt{1 + \sigma^2(\mathbf{x}_*|\mathcal{D}_n)}} \right).
\end{equation*}
The level set posterior at $\mathbf{x}_q$ given observation $y_*=1$ at $\mathbf{x}_*$ is
\begin{equation*}
\pi(\mathbf{x}_q | \mathcal{D}_{n+1}(\mathbf{x}_*, y_* = 1)) = \frac{Z_{q*}}{\Phi \left(a_* \right)}.
\end{equation*}
Given observation $y_*=0$, the level set posterior at $\mathbf{x}_q$ is
\begin{equation*}
\pi(\mathbf{x}_q | \mathcal{D}_{n+1}(\mathbf{x}_*, y_* = 0)) = \frac{\Phi\left( b_q \right) - Z_{q*}}{\Phi \left(-a_* \right)}.
\end{equation*}
If $\mathbf{x}_q = \mathbf{x}_*$, these results hold with $\sigma(\mathbf{x}_q, \mathbf{x}_* | \mathcal{D}_n) = \sigma^2(\mathbf{x}_q | \mathcal{D}_n) = \sigma^2(\mathbf{x}_* | \mathcal{D}_n)$ and $\mu(\mathbf{x}_q | \mathcal{D}_n) = \mu(\mathbf{x}_* | \mathcal{D}_n)$.
\end{theorem}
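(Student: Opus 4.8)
The plan is to reduce the look-ahead level set posterior to a ratio of expectations taken under the \emph{current} GP posterior $p(f \mid \mathcal{D}_n)$, which is tractable even though the full look-ahead posterior is not. By Bayes' rule, conditioning on the extra observation $(\mathbf{x}_*, y_* = 1)$ reweights $p(f \mid \mathcal{D}_n)$ by the likelihood $p(y_* = 1 \mid f(\mathbf{x}_*)) = \Phi(f(\mathbf{x}_*))$, so
\[
\pi(\mathbf{x}_q \mid \mathcal{D}_{n+1}(\mathbf{x}_*, y_*=1)) = \frac{\mathbb{E}_{p(f \mid \mathcal{D}_n)}\!\left[\mathbbm{1}\{f(\mathbf{x}_q) \leq \gamma\}\,\Phi(f(\mathbf{x}_*))\right]}{\mathbb{E}_{p(f \mid \mathcal{D}_n)}\!\left[\Phi(f(\mathbf{x}_*))\right]}.
\]
The denominator is exactly $\mathbb{E}[z(\mathbf{x}_*) \mid \mathcal{D}_n] = \Phi(a_*)$ by Proposition~\ref{prop:zpost} (equivalently, the standard probit-Gaussian integral). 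So the work is in the numerator.

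For the numerator I would introduce an auxiliary variable: write $\Phi(f(\mathbf{x}_*)) = \mathbb{P}(u \leq f(\mathbf{x}_*) \mid f)$ for $u \sim \mathcal{N}(0,1)$ independent of $f$. Then
\[
\mathbb{E}\!\left[\mathbbm{1}\{f(\mathbf{x}_q) \leq \gamma\}\,\Phi(f(\mathbf{x}_*))\right] = \mathbb{P}\!\left(f(\mathbf{x}_q) \leq \gamma,\ u - f(\mathbf{x}_*) \leq 0\right),
\]
where $(f(\mathbf{x}_q),\, u - f(\mathbf{x}_*))$ is jointly Gaussian because $(f(\mathbf{x}_q), f(\mathbf{x}_*))$ is bivariate normal under $p(f \mid \mathcal{D}_n)$ and $u$ is an independent Gaussian. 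I would then compute the three moments of this pair: mean $(\mu(\mathbf{x}_q \mid \mathcal{D}_n),\, -\mu(\mathbf{x}_* \mid \mathcal{D}_n))$, variances $\sigma^2(\mathbf{x}_q \mid \mathcal{D}_n)$ and $1 + \sigma^2(\mathbf{x}_* \mid \mathcal{D}_n)$, and covariance $-\sigma(\mathbf{x}_q, \mathbf{x}_* \mid \mathcal{D}_n)$. Standardizing both coordinates turns the two thresholds into $b_q$ and $a_*$ and the correlation into $-\sigma(\mathbf{x}_q, \mathbf{x}_* \mid \mathcal{D}_n)/(\sigma(\mathbf{x}_q \mid \mathcal{D}_n)\sqrt{1 + \sigma^2(\mathbf{x}_* \mid \mathcal{D}_n)})$, so the probability is precisely $Z_{q*}$, giving the claimed $Z_{q*}/\Phi(a_*)$. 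The $y_* = 0$ case then follows immediately from $p(y_* = 0 \mid f(\mathbf{x}_*)) = 1 - \Phi(f(\mathbf{x}_*))$ and linearity of expectation: the numerator becomes $\mathbb{E}[\mathbbm{1}\{f(\mathbf{x}_q) \leq \gamma\}] - Z_{q*} = \Phi(b_q) - Z_{q*}$ and the denominator becomes $1 - \Phi(a_*) = \Phi(-a_*)$. The degenerate case $\mathbf{x}_q = \mathbf{x}_*$ is handled by substituting $\sigma(\mathbf{x}_q, \mathbf{x}_* \mid \mathcal{D}_n) = \sigma^2(\mathbf{x}_* \mid \mathcal{D}_n)$ and $\mu(\mathbf{x}_q \mid \mathcal{D}_n) = \mu(\mathbf{x}_* \mid \mathcal{D}_n)$ in the same computation, which remains valid because $(f(\mathbf{x}_*),\, u - f(\mathbf{x}_*))$ is still a nondegenerate Gaussian pair.

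The main obstacle is the numerator: recognizing that the probit likelihood weight can be absorbed via the auxiliary-variable representation $\Phi(t) = \mathbb{P}(u \leq t)$ so that the expectation of an indicator against a probit collapses to a bivariate-normal rectangle probability, and then carefully tracking signs and the covariance of $(f(\mathbf{x}_q),\, u - f(\mathbf{x}_*))$ so the standardized correlation matches the stated $Z_{q*}$. Everything else is bookkeeping with Gaussian moments and Proposition~\ref{prop:zpost}.
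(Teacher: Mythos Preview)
Your proposal is correct and takes a genuinely different route from the paper. The paper's proof applies Bayes' theorem in the form
\[
\pi(\mathbf{x}_q \mid \mathcal{D}_{n+1}(\mathbf{x}_*, y_*=1)) = \frac{\mathbb{P}(y_*=1 \mid \mathcal{D}_n, \mathbf{x}_*, f_q \leq \gamma)\,\mathbb{P}(f_q \leq \gamma \mid \mathcal{D}_n)}{\mathbb{P}(y_*=1 \mid \mathcal{D}_n, \mathbf{x}_*)},
\]
and then attacks the first factor by deriving a separate lemma for the truncated conditional density $p(f_* \mid \mathcal{D}_n, f_q \leq \gamma)$, integrating $\Phi(f_*)$ against it, and invoking Owen's tabulated identity $\int \Phi(a+bx)\Phi(h+kx)\phi(x)\,dx = \mathrm{BvN}(\cdot,\cdot;\cdot)$ to land on $Z_{q*}/\Phi(b_q)$. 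Your auxiliary-variable representation $\Phi(f_*) = \mathbb{P}(u \leq f_*)$ collapses the numerator directly into the rectangle probability $\mathbb{P}(f_q \leq \gamma,\ u - f_* \leq 0)$ for a bivariate Gaussian, bypassing both the truncated-conditional lemma and the integral table. This is more elementary and self-contained, and it makes the appearance of $\mathrm{BvN}$ transparent rather than the output of a table lookup; the paper's route, in exchange, isolates $\mathbb{P}(y_*=1 \mid f_q \leq \gamma)$ as a standalone quantity, which may be of independent interest but is not needed for the theorem itself.
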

The proof is in the supplement. There are several routines for efficiently computing $\textrm{BvN}(\cdot, \cdot; \rho)$---we use the method of \citet{genz04}, which produces a differentiable estimate that enables us to compute gradients of the look-ahead level set posterior for acquisition optimization.

This result shows that the look-ahead level set posterior at $\mathbf{x}_q$ given an observation at $\mathbf{x}_*$ can be computed analytically using only the GP posterior at those points. Fig. \ref{fig:posteriors} shows an example of the global look-ahead posterior for the 2-d discrimination test problem from Section \ref{sec:experiments}. The posteriors are conditional on the binary outcome $y_*$, whose distribution is:
\begin{proposition}\label{prop:py1}
$\mathbb{P}(y_* = 1 | \mathcal{D}_n, \mathbf{x}_*) = \Phi(a_*).$ 
\end{proposition}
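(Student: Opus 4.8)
The plan is to reduce this to a single marginalization step and the mean formula already established in Proposition~\ref{prop:zpost}. By construction of the probit model, conditional on the latent value $f(\mathbf{x}_*)$ the outcome is $y_* \sim \textrm{Bernoulli}(z(\mathbf{x}_*))$ with $z(\mathbf{x}_*) = \Phi(f(\mathbf{x}_*))$, so $\mathbb{P}(y_* = 1 \mid f(\mathbf{x}_*)) = \Phi(f(\mathbf{x}_*))$. First I would write
\begin{equation*}
\mathbb{P}(y_* = 1 \mid \mathcal{D}_n, \mathbf{x}_*) = \int \mathbb{P}(y_* = 1 \mid f(\mathbf{x}_*))\, p(f(\mathbf{x}_*) \mid \mathcal{D}_n)\, df(\mathbf{x}_*) = \mathbb{E}[\Phi(f(\mathbf{x}_*)) \mid \mathcal{D}_n] = \mathbb{E}[z(\mathbf{x}_*) \mid \mathcal{D}_n],
\end{equation*}
and then invoke Proposition~\ref{prop:zpost}, which gives $\mathbb{E}[z(\mathbf{x}_*) \mid \mathcal{D}_n] = \Phi(a_*)$ directly, completing the argument.

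For a self-contained derivation of that expectation (in case one does not want to cite Proposition~\ref{prop:zpost}), I would use the standard Gaussian-CDF convolution identity: introduce an auxiliary $\xi \sim \mathcal{N}(0,1)$ independent of $f(\mathbf{x}_*) \mid \mathcal{D}_n \sim \mathcal{N}(\mu(\mathbf{x}_* \mid \mathcal{D}_n), \sigma^2(\mathbf{x}_* \mid \mathcal{D}_n))$, write $\Phi(f(\mathbf{x}_*)) = \mathbb{P}(\xi \leq f(\mathbf{x}_*) \mid f(\mathbf{x}_*))$, and apply the tower property to get $\mathbb{E}[\Phi(f(\mathbf{x}_*)) \mid \mathcal{D}_n] = \mathbb{P}(\xi - f(\mathbf{x}_*) \leq 0 \mid \mathcal{D}_n)$. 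Since $\xi - f(\mathbf{x}_*) \mid \mathcal{D}_n \sim \mathcal{N}(-\mu(\mathbf{x}_* \mid \mathcal{D}_n),\, 1 + \sigma^2(\mathbf{x}_* \mid \mathcal{D}_n))$, this probability equals $\Phi\!\left(\mu(\mathbf{x}_* \mid \mathcal{D}_n) / \sqrt{1 + \sigma^2(\mathbf{x}_* \mid \mathcal{D}_n)}\right) = \Phi(a_*)$.

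There is essentially no substantive obstacle here: the only content is recognizing that the Bernoulli predictive probability is the posterior mean of $z(\mathbf{x}_*)$ and that the probit link makes this a tractable Gaussian integral. The one point worth stating carefully is the independence of the auxiliary variable $\xi$ from the posterior of $f(\mathbf{x}_*)$, which is what licenses combining their variances; everything else is a routine normal-integral computation that mirrors the mean calculation underlying Proposition~\ref{prop:zpost}.
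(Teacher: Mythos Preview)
Your proof is correct and matches the paper's own argument: both marginalize over the latent $f(\mathbf{x}_*)$ to reduce $\mathbb{P}(y_*=1\mid\mathcal{D}_n,\mathbf{x}_*)$ to $\mathbb{E}[\Phi(f(\mathbf{x}_*))\mid\mathcal{D}_n]$ and then invoke the mean computation already done in Proposition~\ref{prop:zpost}. Your optional auxiliary-variable derivation of $\mathbb{E}[\Phi(f_*)\mid\mathcal{D}_n]=\Phi(a_*)$ is a standard alternative to the Gaussian-integral identity the paper cites, but it is not a materially different route.
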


We now show how these results can be used to construct Bernoulli LSE versions of look-ahead acquisition functions from Section \ref{sec:lookahead}, as well as novel acquisition functions.

 \begin{figure*}[t]
     \centering
     \includegraphics{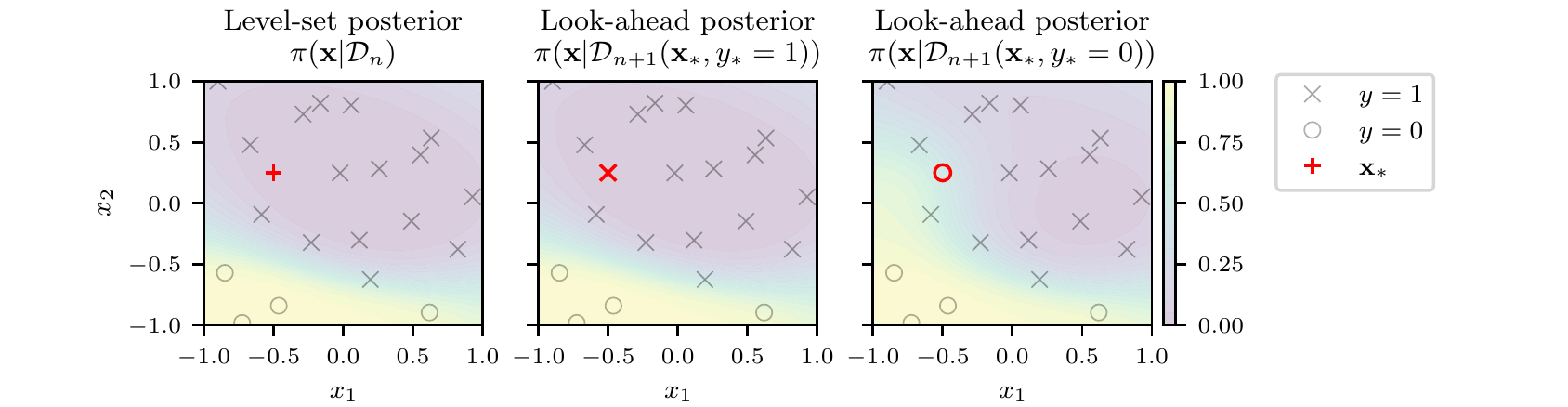}
     \vspace{-15pt}
     \caption{
     \textbf{Level set posteriors}.
     \textit{Left}: The level set posterior for a model fit to 20 observations (gray markers) of the 2-d discrimination test function. \textit{Middle, Right}: Look-ahead posteriors given an observation at $\mathbf{x}_*$ (red marker) of $y_*=1$ (Middle) and $y_*=0$ (Right). The look-ahead posteriors are computed analytically using the formulae of Theorem \ref{thm:post}, and form the basis of acquisition function computation.
     }
     \label{fig:posteriors}
     \vspace{5pt}
 \end{figure*}

 \begin{figure*}[t]
     \centering
     \includegraphics{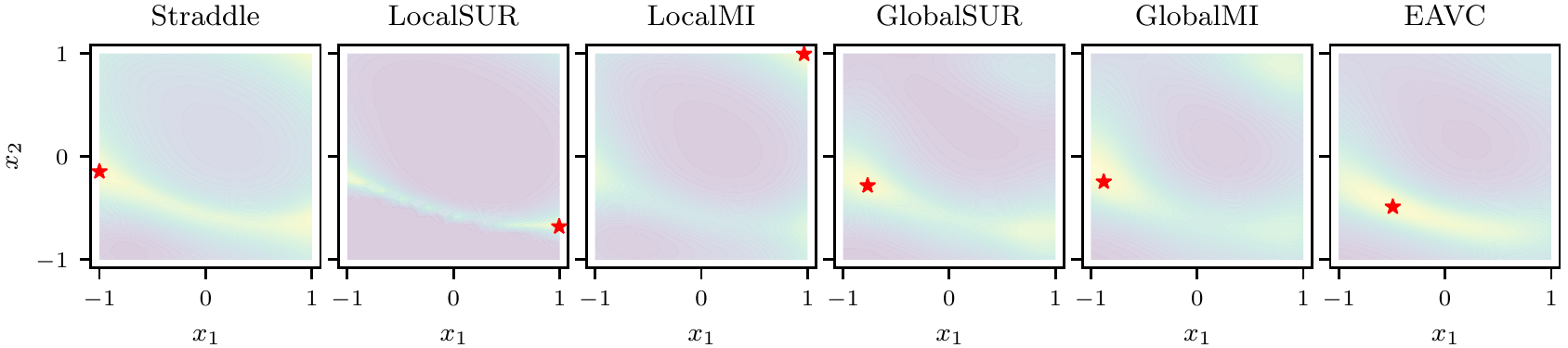}
     \vspace{-15pt}
     \caption{
     \textbf{Acquisition functions}. Acquisition surfaces for the same posterior in Fig. \ref{fig:posteriors}. The red star indicates the point that maximizes the acquisition function. Localized look-ahead methods (LocalSUR, LocalMI) show the same edge-seeking as the non-look-ahead Straddle. Global look-ahead methods (GlobalSUR, GlobalMI, EAVC) select interior points near the threshold.
     }
     \label{fig:acquisitions}
     \vspace{5pt}
 \end{figure*}

\section{BERNOULLI LOOK-AHEAD ACQUISITION FUNCTIONS}

The formulae in Theorem \ref{thm:post} enable efficient computation of any look-ahead acquisition function that depends on the model only via the level set posterior. In particular, acquisition functions of the form (\ref{eq:general}) can be computed for any $Q(\cdot)$ that is a function of $\pi(\cdot)$, which includes both SUR and MI. The key quantity is the expectation over the look-ahead posteriors, $\mathbb{E}_{y_*} [Q(\mathcal{D}_{n+1}(\mathbf{x}_*, y_*))]$, which can be computed via Theorem \ref{thm:post} and Proposition \ref{prop:py1}. For instance,
\begin{align*}
    \mathbb{E}_{y_*} [&Q_\textrm{LocalMI}(\mathcal{D}_{n+1}(\mathbf{x}_*, y_*))] = \\
    &\Phi(a_*) H_b \left( \frac{Z_{q*}}{\Phi \left(a_* \right)} \right) + \Phi(-a_*) H_b \left( \frac{\Phi\left( b_q \right) - Z_{q*}}{\Phi \left(-a_* \right)} \right).
\end{align*}
Expressions for GlobalSUR, LocalSUR, LocalMI, and GlobalMI in the Bernoulli case are obtained by plugging the posterior formulae into (\ref{eq:QGlobalSUR}), (\ref{eq:QLocalSUR}), (\ref{eq:QLocalMI}), and (\ref{eq:QGlobalMI}), respectively. These expressions are fully analytic; complete expressions for each acquisition function are given in the supplement.

\subsection{A Novel Volume Acquisition Function}\label{sec:eavc}
Besides SUR and MI, other quantities of interest for acquisition functions can be computed with the formulae in Theorem \ref{thm:post}. In BO, the successful max-value entropy search acquisition function \citep{wang2017maxvalue} finds the point that is most informative about the best \textit{function value} as opposed to being informative about the \textit{best point}. The same concept can be applied to LSE  by seeking points that are informative about the \textit{volume} of the sublevel set $\mathcal{L}_{\gamma}(f)$. The qMC expected sublevel-set volume is
\begin{equation}\label{eq:vol}
    \widetilde{V}(\mathcal{D}_n) = C \sum_{\mathbf{x}_q \in \mathcal{G}} \pi(\mathbf{x}_q| \mathcal{D}_n).
\end{equation}
There are many approaches one might take to assess how informative a candidate point $\mathbf{x}_*$ is about $\textrm{Vol}(\mathcal{L}_{\gamma}(f))$, and here we consider the expected absolute volume change (EAVC) produced by observing $\mathbf{x}_*$, which is a direct measure of how sensitive $\textrm{Vol}(\mathcal{L}_{\gamma}(f))$ is to the outcome at $\mathbf{x}_*$:
\begin{equation*}
\alpha_{\textrm{EAVC}}(\mathbf{x}_*) = \mathbb{E}_{y_*} \left[ \left| \widetilde{V}(\mathcal{D}_n) - \widetilde{V}(\mathcal{D}_{n+1}(\mathbf{x}_*, y_*)) \right| \right ].
\end{equation*}
The look-ahead volumes under $y_*=0$ and $y_*=1$ can be computed by plugging the look-ahead posteriors into (\ref{eq:vol}), producing an analytic acquisition function; the complete expression is given in the supplement. Along with SUR and MI, EAVC shows the breadth of the acquisition functions that can be computed using Theorem \ref{thm:post}. The acquisition functions cover a broad set of target criteria (misclassification error, entropy, and volume), and also have variety in their functional form: SUR and MI are both of the form (\ref{eq:general}), while EAVC is not, showing that acquisition functions do not have to be of the form (\ref{eq:general}) in order to be computed via Theorem \ref{thm:post}, or to be useful.

Fig. \ref{fig:acquisitions} shows each acquisition function when computed on the posterior of Fig. \ref{fig:posteriors}. The acquisition functions are broadly similar, with elevated values along the threshold and in the high-uncertainty region of the top-right corner. However, they have substantially different maxima, and thus propose different candidates for the next iteration. Straddle selects the point on the threshold at the edge of the design space, consistent with the observation of \citet{bryan2006} that it oversamples the edges. The localized look-ahead methods (LocalMI, LocalSUR) also select points on the edges. Edge points have high uncertainty in GP posteriors, and edge samples are highly informative about the edge point itself, the criterion for a localized look-ahead method. However, edge points are less informative about the global surface as a whole, and so we see the global look-ahead methods (EAVC, GlobalMI, GlobalSUR) select interior points along the highest-uncertainty portion of the threshold.

\begin{figure*}[tbh]
    \includegraphics{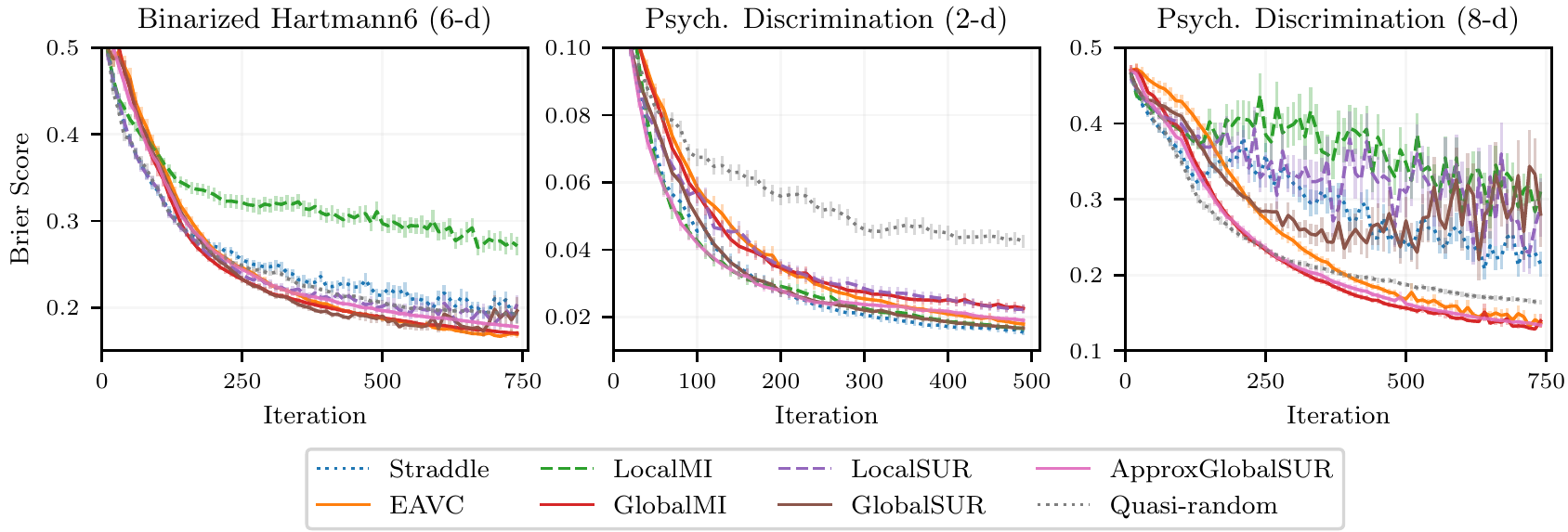}
    \vspace{-15pt}
    \caption{\textbf{Benchmark results}. Brier score (lower is better) for the level set posterior as a function of active sampling iteration, averaged over 280 repeated runs with error bars showing two standard errors. On the high-dimensional problems, straddle and the localized look-ahead methods (LocalMI, LocalSUR) did not perform better than the quasi-random baseline. Global methods GlobalMI and EAVC were best in high dimensions.
    }
    \label{fig:results-sim}
\end{figure*}

\section{BENCHMARK EXPERIMENTS}\label{sec:experiments}

\begin{figure*}[tbh]
    \centering
    \includegraphics[width=0.25\linewidth, height=0.2\linewidth]{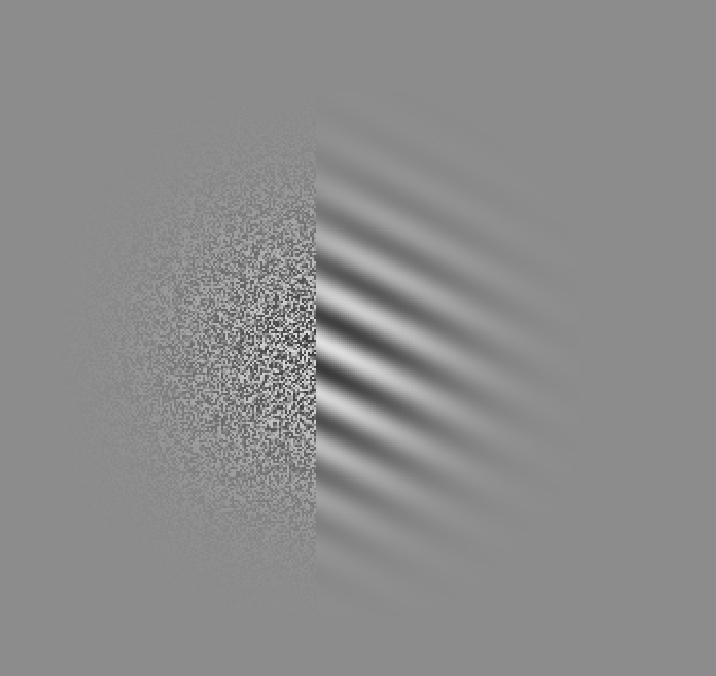}
    \includegraphics[width=0.25\linewidth, height=0.2\linewidth]{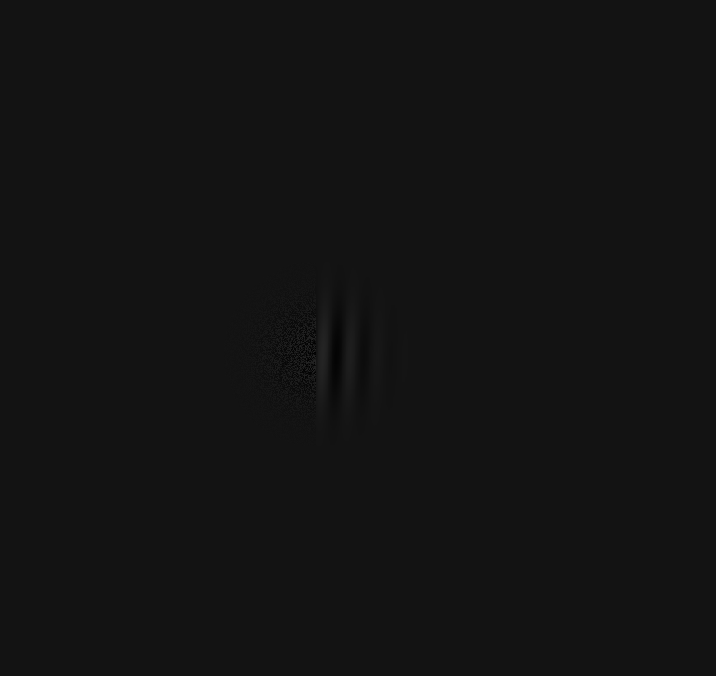}
    \includegraphics[width=0.25\linewidth, height=0.2\linewidth]{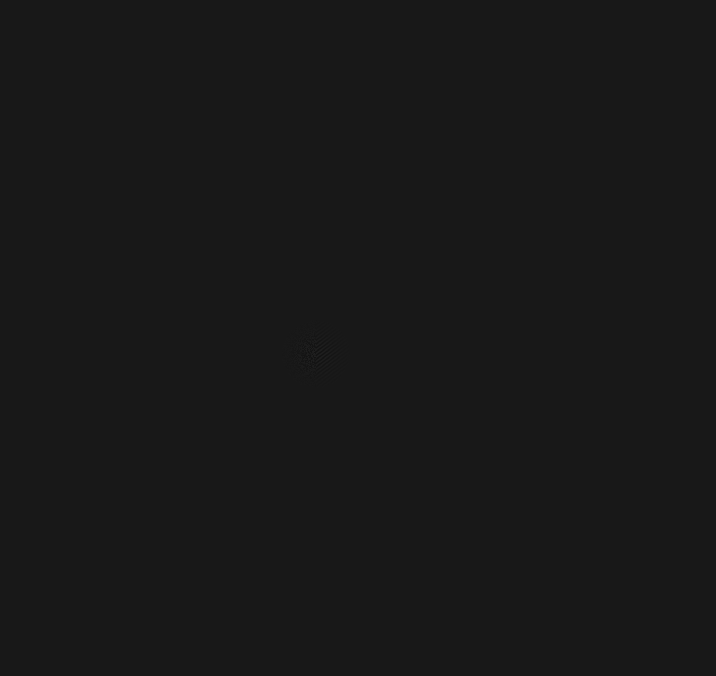}

    \caption{\textbf{Example stimuli from the real psychophysical discrimination task}. The human's task is to determine whether the white noise is on the right or the left of the image. In the three examples in this figure, the correct response is ``left.''
    \emph{Left}: A stimulus whose discrimination probability is approximately 1. The stimulus is both large and clearly visible against the background.  
    \emph{Middle}: A stimulus near the detection threshold of $p=0.75$. The contrast is very low and the stimulus is small.
    \emph{Right}: A stimulus whose discrimination probability is $p=0.5$. The stimulus is essentially invisible against the background and the participant must resort to random guessing.
    The real stimulus was additionally animated with some temporal frequency, and appeared at some distance and angle from the center of the screen.}
    \label{fig:real_stims}
\end{figure*}

We use three benchmark problems to evaluate and understand the performance of look-ahead acquisition functions for Bernoulli LSE. The first is a binarized version of the classic Hartmann 6-d function, using the same modified version of \citet{lyu21}, plus an affine transformation and an inverse probit transform to produce Bernoulli responses; see the supplement for the full functional form.

Inspired by our primary application area of psychophysics, the other benchmark problems are a low-dimensional ($d=2$) and a high-dimensional ($d=8$) synthetic function modeled after psychophysical discrimination tasks. The 2-d discrimination function is from \citet{owen2021}, and is linear in an intensity dimension ($x_2$) with slope given by a polynomial function of the other dimension ($x_1$). It is modeled after psychometric functions in domains such as haptics and multisensory perception. The 8-d discrimination function is similarly linear in an intensity dimension, with a slope given by a sum of shifted and scaled sinusoids, whose parameters form the other seven dimensions. Functional forms of both are given in the supplement.

Both discrimination test functions mimic psychophysics tasks in which the participant must identify which of two images/sounds/etc. has the stimulus, and we record if the identification was correct ($y=1$) or incorrect ($y=0$). When the stimulus intensity is very low, the participant must guess and the probability of a correct response is lower-bounded at $p=0.5$, and reaches this minimum along many of the edges of the search space. The goal in the experiment is to identify the detection threshold, where $p = 0.75$. 

We applied eight active sampling strategies to each of the three problems: the non-look-ahead straddle, applied to the posterior of the response probability $z$ as in Proposition \ref{prop:zpost}; localized look-ahead methods LocalSUR and LocalMI; global look-ahead methods GlobalSUR, GlobalMI, and EAVC; the approximate global SUR method of \citet{lyu21}, ApproxGlobalSUR; and quasi-random search with a scrambled Sobol sequence \citep{owen98}. To ensure differences are due solely to the acquisition function, all methods used the same GP classification surrogate model and the same gradient optimization of the acquisition function---see the supplement for details\footnote{Software for reproducing all of the methods and experiments in this paper, including the real-world task, is available at \url{https://github.com/facebookresearch/bernoulli_lse/}}. We evaluated performance using the Brier score \citep{brier}, a strictly proper scoring rule \citep{gneiting2007strictly} that assesses the quality and calibration of the level set posterior. See the supplement for extended results, including additional evaluation metrics, additional baseline methods such as BALD, and a sensitivity study.

Fig. \ref{fig:results-sim} shows the results of the benchmark experiments. In the 2-d discrimination problem, all LSE acquisition functions performed significantly better than the quasi-random baseline, and straddle and LocalMI were among the best-performing methods. However, on the high-dimensional problems, LocalMI performed worse than quasi-random search by a substantial margin, as did straddle and LocalSUR on the 8-d problem. Global methods generally outperformed localized methods and the quasi-random baseline, consistent with the conclusion that global look-ahead is a key ingredient needed to achieve consistently strong performance. Among global methods, GlobalSUR showed variable performance, consistent with the findings of \citet{lyu21} who noted that SUR underperformed with classification metamodels. Interestingly, ApproxGlobalSUR generally outperformed GlobalSUR, which seemed to underexplore on the 8-d discrimination problem, suggesting that the posterior approximations encouraged better exploration. GlobalMI and EAVC both performed consistently well across problems.

Consistent with Fig. \ref{fig:acquisitions}, we found that localized look-ahead methods sampled significantly more near the edges. On the Binarized Hartmann6 problem, 99\% of samples with LocalMI were near an edge, compared to 80\% with GlobalMI, 55\% with EAVC, and 47\% with quasi-random search---see the supplement for a full analysis of edge sampling behavior. On the low-dimensional problem, the tendency to oversample the edges was not as detrimental and the localized methods showed some advantage. However, they failed badly in high dimensions where a higher degree of exploration was critical.

Importantly, the wall time to select the next point with the global acquisition functions was generally under a second in a standard multi-core setting, making these methods suitable for real human experiments---see the supplement for details on running times.

\section{REAL PSYCHOPHYSICS TASK}\label{sec:realworld}
The contrast sensitivity function (CSF) describes how human visual sensitivity depends on stimulus properties such as spatial frequency and contrast. It is a crucial model of human vision used for clinical assessment \citep{clinicalcsf} and in applied settings to estimate visual appearance \citep{campbellandrobson, hdr-vdp2, fovvvdp}. Contrast sensitivity is affected by a number of variables including eccentricity, size, color, orientation, mean luminance, spatial frequency, and temporal frequency \citep{Robson66, Wright83, mullen85, FOLEY07, kim20}. Contrast sensitivity thresholds across these dimensions have been previously measured piecemeal with traditional psychophysical methods which cannot scale beyond three or four dimensions, and therefore a definitive CSF simultaneously accounting for all of these variables does not exist.

To evaluate our methods on CSF threshold identification, we ran a real CSF psychophysical discrimination study on one of the authors using Psychopy \citep{Peirce2019PsychoPy2EI}. As is standard for CSF measurement, stimuli were animated, Gaussian-windowed sinusoidal gratings, conventionally known as Gabor patches \citep{gabor1946theory}, generated by convolving a sinusoid with a Gaussian, which then had one half of the image scrambled, and were animated by advancing the phase of the sinusoid. Fig.~\ref{fig:real_stims} shows three examples of stimuli used in this experiment, where the task for the participant was to identify whether the scrambled half of the image was on the left or the right, and the $y$ response is whether they correctly identified the side with the stimulus ($y=1$) or not ($y=0$). This was thus a discrimination task like those in Section \ref{sec:experiments} where the success rate was lower-bounded by $p=0.5$ (guessing). The goal was to identify the detection threshold where $p=0.75$. For each stimulus, we varied eight properties that are known to affect the CSF: background luminance, stimulus contrast, stimulus orientation, temporal frequency of the animation, spatial frequency, stimulus size, and location on the screen (angle and distance from center). We collected responses to 1000 quasi-random stimuli which were used to fit a 6-d surrogate model for benchmarking purposes---see the supplement for details. We evaluated the same LSE methods from Section \ref{sec:experiments}, using the surrogate model as ground truth from which Bernoulli responses were simulated.

\begin{figure}[t]
    \includegraphics{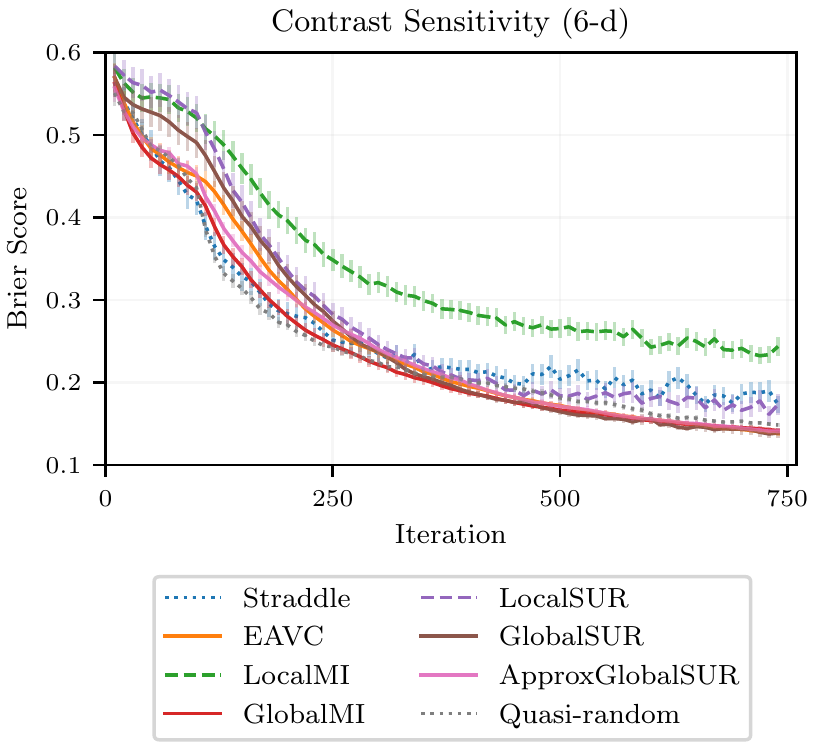}
    \caption{\textbf{Real psychophysics task results}. Performance averaged across 280 repeated runs on the real contrast sensitivity function task. As in the synthetic benchmarks, non-global acquisition functions generally performed poorly, and the quasi-random baseline was remarkably strong. Global look-ahead methods performed best, and significantly better than the other methods.}
    \label{fig:results-real}
\end{figure}

The results on the real-world CSF task are shown in Fig. \ref{fig:results-real}. As in the high-dimensional synthetic problems, straddle and the local methods (LocalMI and LocalSUR) failed to outperform the non-active quasi-random baseline. Global look-ahead methods continued to consistently perform best.

\section{DISCUSSION}\label{sec:discussion}

We have derived analytic formulae for the look-ahead level-set posteriors with Bernoulli observations, and used them to construct acquisition functions for Bernoulli LSE. The formulae enabled applying state-of-the-art approaches for Gaussian observations, SUR and MI, to the Bernoulli setting, while also making it easy to construct EAVC, a novel acquisition function. Prior to this work, none of the look-ahead acquisition functions could be applied directly to Bernoulli LSE, leaving the straddle and quasi-random search as the primary available strategies. The results of Theorem \ref{thm:post} have thus greatly expanded the Bernoulli LSE acquisition toolbox.

Our empirical results showed that the \textit{global} look-ahead acquisition functions developed in this paper are essential for consistently achieving good estimates of the level set. The localized look-ahead methods LocalSUR and LocalMI have previously been studied with Gaussian observations, and shown to perform well on those problems \citep{picheny10, nguyen21}; we found that in high-dimensional problems with Bernoulli observations, they often performed worse than quasi-random search. This highlights the importance of evaluating acquisition functions particularly for this setting, as well as the key differences between this setting and the more typical Gaussian observations.

The setting we consider here is particularly challenging for active learning because Bernoulli observations in essence have very high noise levels in the region of interest. The standard deviation of a Bernoulli random variable is $\sqrt{p(1-p)}$, which at the target threshold here of $p=0.75$ is approximately $0.43$; this nearly equals the total variation of the entire function ($p$ from $0.5$ to $1$). Noise levels on par with the total function variation make it difficult to learn a decent global surrogate, and mean that many observations are required to significantly reduce posterior variance. This exacerbates existing boundary over-exploration pathologies of GPs in general \citep{Siivola2018} and classification GPs in particular \citep[\textit{e.g.}][]{Song2017b}. Localized look-ahead methods target areas of high variance on the edges, and, unlike in the Gaussian case, get stuck on the edges because in high dimensions the variance is never sufficiently reduced. Thus having a look-ahead acquisition function is not by itself sufficient to achieve good performance, we must look ahead to the global impact of a point.

The strong performance of quasi-random search as a baseline on the high-dimensional problems highlights an interesting difference between LSE and BO, where quasi-random search does not typically provide as strong a comparator. In BO, the target is often a single point, the global optimum. In LSE, we are trying to learn the boundary of the set $\mathcal{L}_{\gamma}(f)$, which in general could be a $(d-1)$-dimensional manifold. LSE thus inherently requires more global evaluation than BO. Quasi-random search performs maximally global evaluation of the function, which is much less detrimental for LSE than for BO. Much of the literature on LSE has not included a critical evaluation of random or quasi-random search in benchmark experiments---our results show that one should always be included.

Our real-world application focused on a visual psychophysics task, but there is a broad set of other useful applications for Bernoulli LSE. In robotics, one may wish to find the set of controller parameters under which a robot can successfully traverse an obstacle with high probability \citep{tesch13}. This problem can be cast as Bernoulli LSE. Several other important classes of problems have discrete outputs, such as ordinal regression \citep{chu2005ordinal} and preference learning \citep{chu2005pref, furnkranz2010pref}. Finding all configurations that are preferred to a current baseline via preference learning can be cast as Bernoulli LSE. Finally, psychophysics itself comprises many application areas: it is a foundational component of AR/VR research, a rapidly developing area of computing, while also having several important applications in disease diagnostics and management, as described in the Introduction.

While our results show strong estimation performance with the acquisition functions developed here, there remain several important areas of future work. First, in acquisition function development: There is no single best acquisition function for all problems, and LSE as a field will benefit from expanding the acquisition toolbox. Our results make it easy to compute any look-ahead acquisition function that is a function of the level set posterior, which can accelerate development for Bernoulli LSE. Second, in the scope of the look-ahead: Recent work in BO has targeted multi-step acquisition functions, in which we look ahead multiple steps of acquisition rather than just one as is done here \citep{gonzalez2016glasses, jiang2020binoculars, jiang2020efficient}. Our results here could form a basis for non-myopic Bernoulli LSE. Finally, in the model classes: Similar results may exist for other types of classification GPs, such as the logit model or skew GPs \citep{skewgp}, which have different advantages relative to the probit model used here.

\subsubsection*{Acknowledgements}
Thanks to James Wilson for developing a differentiable BVN routine that enabled gradient optimization of the look-ahead acquisition functions; Lucy Owen for assistance with implementing the human experiments; the AEPsych team for development of the \href{https://github.com/facebookresearch/aepsych}{AEPsych} platform in which the experiments were run; and the anonymous reviewers for their constructive feedback.

\bibliography{refs}

\begin{thebibliography}{}

\bibitem[Azzimonti et~al., 2021]{azzimonti21}
Azzimonti, D., Ginsbourger, D., Chevalier, C., Bect, J., and Richet, Y. (2021).
\newblock Adaptive design of experiments for conservative estimation of
  excursion sets.
\newblock {\em Technometrics}, 63(1):13--26.

\bibitem[Balandat et~al., 2020]{botorch}
Balandat, M., Karrer, B., Jiang, D.~R., Daulton, S., Letham, B., Wilson, A.~G.,
  and Bakshy, E. (2020).
\newblock {BoTorch}: A framework for efficient {M}onte-{C}arlo {B}ayesian
  optimization.
\newblock In {\em Advances in Neural Information Processing Systems 33},
  NeurIPS, pages 21524--21538.

\bibitem[Bect et~al., 2012]{bect12}
Bect, J., Ginsbourger, D., Li, L., Picheny, V., and Vazquez, E. (2012).
\newblock Sequential design of computer experiments for the estimation of a
  probability of failure.
\newblock {\em Statistics and Computing}, 22:773--793.

\bibitem[Benavoli et~al., 2020]{skewgp}
Benavoli, A., Azzimonti, D., and Piga, D. (2020).
\newblock Skew {G}aussian process for classification.
\newblock {\em Machine Learning}, 109:1877--1902.

\bibitem[Bogunovic et~al., 2016]{truvar}
Bogunovic, I., Scarlett, J., Krause, A., and Cevher, V. (2016).
\newblock Truncated variance reduction: A unified approach to {B}ayesian
  optimization and level-set estimation.
\newblock In {\em Advances in Neural Information Processing Systems 29}, NIPS,
  pages 1507--1515.

\bibitem[Brier, 1950]{brier}
Brier, G.~W. (1950).
\newblock Verification of forecasts expressed in terms of probability.
\newblock {\em Monthly Weather Review}, 78(1):1--3.

\bibitem[Bryan et~al., 2005]{bryan2006}
Bryan, B., Nichol, R.~C., Genovese, C.~R., Schneider, J., Miller, C.~J., and
  Wasserman, L. (2005).
\newblock Active learning for identifying function threshold boundaries.
\newblock In {\em Advances in Neural Information Processing Systems 18}, NIPS,
  pages 163--170.

\bibitem[Caflisch, 1998]{caflisch98}
Caflisch, R.~E. (1998).
\newblock {M}onte {C}arlo and quasi-{M}onte {C}arlo methods.
\newblock {\em Acta Numerica}, 7:1--49.

\bibitem[Campbell and Robson, 1968]{campbellandrobson}
Campbell, F. and Robson, J. (1968).
\newblock Application of {F}ourier analysis to the visibility of gratings.
\newblock {\em The Journal of Physiology}, 197(3):551--66.

\bibitem[Chevalier et~al., 2014]{chevalier14}
Chevalier, C., Bect, J., Ginsbourger, D., Vazquez, E., Picheny, V., and Richet,
  Y. (2014).
\newblock Fast parallel {K}riging-based stepwise uncertainty reduction with
  application to the identification of an excursion set.
\newblock {\em Technometrics}, 56(4):455--465.

\bibitem[Chu and Ghahramani, 2005]{chu2005pref}
Chu, W. and Ghahramani, Z. (2005).
\newblock Preference learning with {G}aussian processes.
\newblock In {\em Proceedings of the 22nd International Conference on Machine
  Learning}, ICML, pages 137--144.

\bibitem[Chu et~al., 2005]{chu2005ordinal}
Chu, W., Ghahramani, Z., and Williams, C.~K. (2005).
\newblock Gaussian processes for ordinal regression.
\newblock {\em Journal of Machine Learning Research}, 6:1019--1041.

\bibitem[Cox and de~Vries, 2016]{cox2016bayesian}
Cox, M. and de~Vries, B. (2016).
\newblock A {B}ayesian binary classification approach to pure tone audiometry.
\newblock {\em arXiv preprint arXiv:1511.08670}.

\bibitem[De~Boer and Bouwmeester, 1975]{de1975clinical}
De~Boer, E. and Bouwmeester, J. (1975).
\newblock Clinical psychophysics: Illustrated by the problem of auditory
  overload.
\newblock {\em Audiology}, 14(4):274--299.

\bibitem[Fechner et~al., 1966]{fechner1966elements}
Fechner, G.~T., Howes, D.~H., and Boring, E.~G. (1966).
\newblock {\em Elements of psychophysics}, volume~1.
\newblock Holt, Rinehart and Winston, New York.

\bibitem[Fitzke, 1988]{fitzke1988clinical}
Fitzke, F. (1988).
\newblock Clinical psychophysics.
\newblock {\em Eye}, 2(1):S233--S241.

\bibitem[Foley et~al., 2007]{FOLEY07}
Foley, J.~M., Varadharajan, S., Koh, C.~C., and Farias, M.~C. (2007).
\newblock Detection of {G}abor patterns of different sizes, shapes, phases and
  eccentricities.
\newblock {\em Vision Research}, 47(1):85--107.

\bibitem[F{\"u}rnkranz and H{\"u}llermeier, 2010]{furnkranz2010pref}
F{\"u}rnkranz, J. and H{\"u}llermeier, E. (2010).
\newblock Preference learning and ranking by pairwise comparison.
\newblock In F{\"u}rnkranz, J. and H{\"u}llermeier, E., editors, {\em
  Preference learning}, pages 65--82. Springer, Berlin, Heidelberg.

\bibitem[Gabor, 1946]{gabor1946theory}
Gabor, D. (1946).
\newblock Theory of communication. {P}art 1: The analysis of information.
\newblock {\em Journal of the Institution of Electrical Engineers-Part III:
  Radio and Communication Engineering}, 93(26):429--441.

\bibitem[Gardner et~al., 2015]{Gardner2015a}
Gardner, J.~R., Song, X.~D., Weinberger, K.~Q., Barbour, D., and Cunningham,
  J.~P. (2015).
\newblock Psychophysical detection testing with {B}ayesian active learning.
\newblock In {\em Proceedings of the Thirty-First Conference on Uncertainty in
  Artificial Intelligence}, UAI, pages 286--297.

\bibitem[Genz, 2004]{genz04}
Genz, A. (2004).
\newblock Numerical computation of rectangular bivariate and trivariate normal
  and t probabilities.
\newblock {\em Statistics and Computing}, 14:251--260.

\bibitem[Gneiting and Raftery, 2007]{gneiting2007strictly}
Gneiting, T. and Raftery, A.~E. (2007).
\newblock Strictly proper scoring rules, prediction, and estimation.
\newblock {\em Journal of the American Statistical Association},
  102(477):359--378.

\bibitem[Gonz{\'a}lez et~al., 2016]{gonzalez2016glasses}
Gonz{\'a}lez, J., Osborne, M., and Lawrence, N. (2016).
\newblock {GLASSES}: Relieving the myopia of {B}ayesian optimisation.
\newblock In {\em Proceedings of the 19th International Conference on
  Artificial Intelligence and Statistics}, AISTATS, pages 790--799.

\bibitem[Gotovos et~al., 2013]{lse}
Gotovos, A., Casati, N., Hitz, G., and Krause, A. (2013).
\newblock Active learning for level set estimation.
\newblock In {\em Proceedings of the Twenty-Third International Joint
  Conference on Artifical Intelligence}, IJCAI, pages 1344--1350.

\bibitem[Hensman et~al., 2015]{gpclass_vi}
Hensman, J., Matthews, A., and Ghahramani, Z. (2015).
\newblock Scalable variational {G}aussian process classification.
\newblock In {\em Proceedings of the Eighteenth International Conference on
  Artificial Intelligence and Statistics}, AISTATS, pages 351--360.

\bibitem[Hern\'{a}ndez-Lobato et~al., 2014]{hernandezlobato2014pes}
Hern\'{a}ndez-Lobato, J.~M., Hoffman, M.~W., and Ghahramani, Z. (2014).
\newblock Predictive entropy search for efficient global optimization of
  black-box functions.
\newblock In {\em Advances in Neural Information Processing Systems 27}, NIPS,
  pages 918--926.

\bibitem[Hoang et~al., 2014]{hoang14}
Hoang, T.~N., Low, K.~H., Jaillet, P., and Kankanhalli, M. (2014).
\newblock Nonmyopic $\epsilon$-bayes-optimal active learning of {G}aussian
  processes.
\newblock In {\em Proceedings of the 31th International Conference on Machine
  Learning}, ICML, pages 739--747.

\bibitem[Houlsby et~al., 2011]{Houlsby2011}
Houlsby, N., Husz{\'{a}}r, F., Ghahramani, Z., and Lengyel, M. (2011).
\newblock Bayesian active learning for classification and preference learning.
\newblock {\em arXiv preprint arXiv:1112.5745}.

\bibitem[Jiang et~al., 2020a]{jiang2020binoculars}
Jiang, S., Chai, H., Gonzalez, J., and Garnett, R. (2020a).
\newblock {BINOCULARS} for efficient, nonmyopic sequential experimental design.
\newblock In {\em Proceedings of the 37th International Conference on Machine
  Learning}, ICML, pages 4794--4803.

\bibitem[Jiang et~al., 2020b]{jiang2020efficient}
Jiang, S., Jiang, D., Balandat, M., Karrer, B., Gardner, J., and Garnett, R.
  (2020b).
\newblock Efficient nonmyopic {B}ayesian optimization via one-shot multi-step
  trees.
\newblock In {\em Advances in Neural Information Processing Systems 33},
  NeurIPS, pages 18039--18049.

\bibitem[Jones et~al., 1998]{jones98}
Jones, D.~R., Schonlau, M., and Welch, W.~J. (1998).
\newblock Efficient global optimization of expensive black-box functions.
\newblock {\em Journal of Global Optimization}, 13:455--492.

\bibitem[Jones et~al., 2001]{scipy}
Jones, E., Oliphant, T., Peterson, P., et~al. (2001).
\newblock {SciPy}: Open source scientific tools for {Python}.

\bibitem[Kapoor et~al., 2007]{kapoor07}
Kapoor, A., Grauman, K., Urtasun, R., and Darrell, T. (2007).
\newblock Active learning with {G}aussian processes for object categorization.
\newblock In {\em Proceedings of the 11th IEEE International Conference on
  Computer Vision}, ICCV.

\bibitem[Kim et~al., 2020]{kim20}
Kim, M., Ashraf, M., Pérez-Ortiz, M., Martinovic, J., Wuerger, S., and
  Mantiuk, R. (2020).
\newblock Contrast sensitivity functions for {HDR} displays.
\newblock {\em London Imaging Meeting}, 2020:44--48.

\bibitem[Kim et~al., 2017]{Kim2017}
Kim, W., Pitt, M.~A., Lu, Z.~L., and Myung, J.~I. (2017).
\newblock Planning beyond the next trial in adaptive experiments: A dynamic
  programming approach.
\newblock {\em Cognitive Science}, 41(8):2234--2252.

\bibitem[Kress, 2020]{kress2020optical}
Kress, B.~C. (2020).
\newblock {\em Optical architectures for augmented-, virtual-, and
  mixed-reality headsets}.
\newblock SPIE, Bellingham, Washington.

\bibitem[Kuss and Rasmussen, 2005]{kuss2005}
Kuss, M. and Rasmussen, C.~E. (2005).
\newblock Assessing approximate inference for binary {G}aussian process
  classification.
\newblock {\em Journal of Machine Learning Research}, 6:1679--1704.

\bibitem[Lyu et~al., 2021]{lyu21}
Lyu, X., Binois, M., and Ludkovski, M. (2021).
\newblock Evaluating {G}aussian process metamodels and sequential designs for
  noisy level set estimation.
\newblock {\em Statistics and Computing}, 31(43):1--21.

\bibitem[Mantiuk et~al., 2011]{hdr-vdp2}
Mantiuk, R., Kim, K.~J., Rempel, A.~G., and Heidrich, W. (2011).
\newblock {HDR-VDP-2}: A calibrated visual metric for visibility and quality
  predictions in all luminance conditions.
\newblock {\em ACM Transactions on Graphics}, 30(4):1--14.

\bibitem[Mantiuk et~al., 2021]{fovvvdp}
Mantiuk, R.~K., Denes, G., Chapiro, A., Kaplanyan, A., Rufo, G., Bachy, R.,
  Lian, T., and Patney, A. (2021).
\newblock {FovVideoVDP}: A visible difference predictor for wide field-of-view
  video.
\newblock {\em ACM Transactions on Graphics}, 40(4):1--19.

\bibitem[Minka, 2001]{gpclass_ep}
Minka, T.~P. (2001).
\newblock Expectation propagation for approximate {B}ayesian inference.
\newblock In {\em Proceedings of the Seventeenth Conference on Uncertainty in
  Artificial Intelligence}, UAI, pages 362--369.

\bibitem[Moore, 1996]{moore1996perceptual}
Moore, B.~C. (1996).
\newblock Perceptual consequences of cochlear hearing loss and their
  implications for the design of hearing aids.
\newblock {\em Ear and Hearing}, 17(2):133--161.

\bibitem[Mullen, 1985]{mullen85}
Mullen, K. (1985).
\newblock The contrast sensitivity of human color vision to red-green and
  blue-yellow chromatic gratings.
\newblock {\em The Journal of Physiology}, 359:381--400.

\bibitem[Nadenau et~al., 2000]{nadenau2000human}
Nadenau, M.~J., Winkler, S., Alleysson, D., and Kunt, M. (2000).
\newblock Human vision models for perceptually optimized image processing -- a
  review.
\newblock {\em Proceedings of the IEEE}, 32.

\bibitem[Nguyen et~al., 2021]{nguyen21}
Nguyen, Q.~P., Low, B. K.~H., and Jaillet, P. (2021).
\newblock An information-theoretic framework for unifying active learning
  problems.
\newblock In {\em Proceedings of the 35th AAAI Conference on Artificial
  Intelligence}, AAAI, pages 9126--9134.

\bibitem[Owen, 1998]{owen98}
Owen, A.~B. (1998).
\newblock Scrambling {S}obol' and {N}iederreiter-{X}ing points.
\newblock {\em Journal of Complexity}, 14:466--489.

\bibitem[Owen, 1980]{owen_integrals}
Owen, D.~B. (1980).
\newblock A table of normal integrals.
\newblock {\em Communications in Statistics - Simulation and Computation},
  9(4):389--419.

\bibitem[Owen et~al., 2021]{owen2021}
Owen, L., Browder, J., Letham, B., Stocek, G., Tymms, C., and Shvartsman, M.
  (2021).
\newblock Adaptive nonparametric psychophysics.
\newblock {\em arXiv preprint arXiv:2104.09549}.

\bibitem[Owsley, 2003]{clinicalcsf}
Owsley, C. (2003).
\newblock Contrast sensitivity.
\newblock {\em Ophthalmology Clinics of North America}, 16(2):171—177.

\bibitem[Pappas et~al., 1996]{pappas1996supra}
Pappas, T.~N., Michel, T.~A., and Hinds, R.~O. (1996).
\newblock Supra-threshold perceptual image coding.
\newblock In {\em Proceedings of 3rd IEEE International Conference on Image
  Processing}, ICIP, pages 237--240.

\bibitem[Patefield and Tandy, 2000]{owens_t}
Patefield, M. and Tandy, D. (2000).
\newblock Fast and accurate calculation of {O}wen’s {T} function.
\newblock {\em Journal of Statistical Software}, 5(5):1--25.

\bibitem[Peirce et~al., 2019]{Peirce2019PsychoPy2EI}
Peirce, J., Gray, J.~R., Simpson, S., MacAskill, M.~R., H{\"o}chenberger, R.,
  Sogo, H., Kastman, E.~K., and Lindel{\o}v, J.~K. (2019).
\newblock {PsychoPy2}: Experiments in behavior made easy.
\newblock {\em Behavior Research Methods}, 51:195--203.

\bibitem[Picheny et~al., 2010]{picheny10}
Picheny, V., Ginsbourger, D., Roustant, O., Haftka, R.~T., and Kim, N.-H.
  (2010).
\newblock Adaptive designs of experiments for accurate approximation of a
  target region.
\newblock {\em Journal of Mechanical Design}, 132(7):071008.

\bibitem[Ramakrishnan et~al., 2005]{lse_comms}
Ramakrishnan, N., Bailey-Kellogg, C., Tadepalli, S., and Pandey, V.~N. (2005).
\newblock Gaussian processes for active data mining of spatial aggregates.
\newblock In {\em Proceedings of the 2005 SIAM International Conference on Data
  Mining}, SDM, pages 427--438.

\bibitem[Ranjan et~al., 2008]{ranjan08}
Ranjan, P., Bingham, D., and Michailidis, G. (2008).
\newblock Sequential experiment design for contour estimation from complex
  computer codes.
\newblock {\em Technometrics}, 50(4):527--541.

\bibitem[Rasmussen and Williams, 2006]{williams2006gaussian}
Rasmussen, C.~E. and Williams, C. K.~I. (2006).
\newblock {\em Gaussian Processes for Machine Learning}.
\newblock The MIT Press, Cambridge, Massachusetts.

\bibitem[Robson, 1966]{Robson66}
Robson, J.~G. (1966).
\newblock Spatial and temporal contrast-sensitivity functions of the visual
  system.
\newblock {\em Journal of the Optical Society of America}, 56(8):1141--1142.

\bibitem[Schlittenlacher et~al., 2020]{Schlittenlacher2020}
Schlittenlacher, J., Turner, R.~E., and Moore, B.~C. (2020).
\newblock Application of {B}ayesian active learning to the estimation of
  auditory filter shapes using the notched-noise method.
\newblock {\em Trends in Hearing}, 24.

\bibitem[Schlittenlacher et~al., 2018]{Schlittenlacher2018}
Schlittenlacher, J., Turner, R.~E., and Moore, B. C.~J. (2018).
\newblock Audiogram estimation using {B}ayesian active learning.
\newblock {\em The Journal of the Acoustical Society of America},
  144(1):421--430.

\bibitem[Scott et~al., 2011]{scott11kg}
Scott, W., Frazier, P., and Powell, W. (2011).
\newblock The correlated knowledge gradient for simulation optimization of
  continuous parameters using {G}aussian process regression.
\newblock {\em {SIAM} Journal of Optimization}, 21(3):996--1026.

\bibitem[Siivola et~al., 2018]{Siivola2018}
Siivola, E., Vehtari, A., Vanhatalo, J., Gonzalez, J., and Andersen, M.~R.
  (2018).
\newblock Correcting boundary over-exploration deficiencies in {B}ayesian
  optimization with virtual derivative sign observations.
\newblock In {\em Proceedings of the 28th IEEE International Workshop on
  Machine Learning for Signal Processing}, MLSP.

\bibitem[Song et~al., 2017]{Song2017b}
Song, X.~D., Garnett, R., and Barbour, D.~L. (2017).
\newblock Psychometric function estimation by probabilistic classification.
\newblock {\em The Journal of the Acoustical Society of America},
  141(4):2513--2525.

\bibitem[Song et~al., 2018]{Song2018}
Song, X.~D., Sukesan, K.~A., and Barbour, D.~L. (2018).
\newblock Bayesian active probabilistic classification for psychometric field
  estimation.
\newblock {\em Attention, Perception, {\&} Psychophysics}, 80(3):798--812.

\bibitem[Song et~al., 2015]{song2015}
Song, X.~D., Wallace, B.~M., Gardner, J.~R., Ledbetter, N.~M., Weinberger,
  K.~Q., and Barbour, D.~L. (2015).
\newblock Fast, continuous audiogram estimation using machine learning.
\newblock {\em Ear and Hearing}, 36(6):e326--e335.

\bibitem[Srinivas et~al., 2010]{gpucb}
Srinivas, N., Krause, A., Kakade, S., and Seeger, M. (2010).
\newblock Gaussian process optimization in the bandit setting: No regret and
  experimental design.
\newblock In {\em Proceedings of the 27th International Conference on Machine
  Learning}, ICML.

\bibitem[Tesch et~al., 2013]{tesch13}
Tesch, M., Schneider, J., and Choset, H. (2013).
\newblock Expensive function optimization with stochastic binary outcomes.
\newblock In {\em Proceedings of the 30th International Conference on Machine
  Learning}, ICML, pages 1283--1291.

\bibitem[{Wang} and {Jegelka}, 2017]{wang2017maxvalue}
{Wang}, Z. and {Jegelka}, S. (2017).
\newblock Max-value entropy search for efficient {B}ayesian optimization.
\newblock In {\em Proceedings of the 34th International Conference on Machine
  Learning}, ICML, pages 3627--3635.

\bibitem[Williams and Barber, 1998]{gpclass_lp}
Williams, C. K.~I. and Barber, D. (1998).
\newblock Bayesian classification with {G}aussian processes.
\newblock {\em {IEEE} Transactions on Pattern Analysis and Machine
  Intelligence}, 20(12):1342--1351.

\bibitem[Wright and Johnston, 1983]{Wright83}
Wright, M. and Johnston, A. (1983).
\newblock Spatiotemporal contrast sensitivity and visual field locus.
\newblock {\em Vision Research}, 23:983--989.

\bibitem[Zanette et~al., 2018]{zanette18}
Zanette, A., Zhang, J., and Kochenderfer, M.~J. (2018).
\newblock Robust super-level set estimation using {G}aussian processes.
\newblock In {\em Joint European Conference on Machine Learning and Knowledge
  Discovery in Databases}, ECML PKDD, pages 276--291.

\end{thebibliography}


\newpage
\onecolumn

\aistatstitle{Look-Ahead Acquisition Functions for Bernoulli Level Set Estimation: Supplementary Materials}

\setcounter{section}{0}
\setcounter{equation}{0}
\setcounter{figure}{0}
\setcounter{table}{0}
\setcounter{page}{1}
\renewcommand{\thesection}{S\arabic{section}}
\renewcommand{\theequation}{S\arabic{equation}}
\renewcommand{\thefigure}{S\arabic{figure}}
\renewcommand{\thetable}{S\arabic{table}}
\renewcommand{\thelemma}{S\arabic{lemma}}



\section{PROOFS}
Here we provide proofs of the results in Propositions \ref{prop:zpost} and \ref{prop:py1}, and Theorem \ref{thm:post}. \citet{owen_integrals} undertook the Herculean effort of producing a comprehensive collection of solutions to Gaussian integrals. We use several of his results, given in the following Lemma. We use the following notation for special functions:
\begin{itemize}
\setlength\itemsep{0em}
    \item $\Phi(\cdot)$ is the standard Gaussian cumulative distribution function.
    \item $\phi(\cdot)$ is the standard Gaussian density function.
    \item $\phi(\cdot ; \mu, \sigma^2)$ is the Gaussian density with mean $\mu$ and variance $\sigma^2$, so that $\phi(x ; \mu, \sigma^2) = \frac{1}{\sigma} \phi \left( \frac{x - \mu}{\sigma} \right)$.
    \item $T(\cdot, \cdot)$ is Owen's T function.
    \item $\textrm{BvN}(\cdot, \cdot; \rho)$ is the standard bivariate normal cumulative distribution function.
\end{itemize}

\begin{lemma}
\begin{align}\label{eq:int1}
    \int_{-\infty}^{+\infty} \Phi(a + bx) \phi(x) dx &= \Phi \left( \frac{a}{\sqrt{1 + b^2}} \right),\\\label{eq:int2}
    \int_{-\infty}^{+\infty} \Phi(a + bx)^2 \phi(x) dx &= \Phi \left( \frac{a}{\sqrt{1 + b^2}} \right) - 2 T \left( \frac{a}{\sqrt{1 + b^2}}, \frac{1}{\sqrt{1 + 2b^2}} \right),\\\label{eq:int3}
    \int_{-\infty}^{+\infty} \Phi(a + b x) \Phi(h + k x) \phi(x) dx &= BvN\left(\frac{a}{\sqrt{1 + b^2}}, \frac{h}{\sqrt{1 + k^2}}; \frac{bk}{\sqrt{1 + b^2}\sqrt{1 + k^2}} \right).
\end{align}
\end{lemma}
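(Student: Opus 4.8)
The plan is to prove all three identities by the standard device of representing $\Phi$ as the distribution function of auxiliary independent standard normal variables, which turns each integral into an orthant probability for a jointly Gaussian vector whose covariance can be read off by hand. This is exactly the reasoning underlying the corresponding entries in the table of \citet{owen_integrals}, which could also simply be cited.

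First I would dispatch (\ref{eq:int1}). Let $X, U$ be independent $\mathcal{N}(0,1)$. Then $\int_{-\infty}^{+\infty} \Phi(a+bx)\phi(x)\,dx = \mathbb{E}_X[\mathbb{P}(U \le a + bX \mid X)] = \mathbb{P}(U - bX \le a)$, and since $U - bX$ is centered Gaussian with variance $1+b^2$, this equals $\Phi(a/\sqrt{1+b^2})$. Next, (\ref{eq:int3}) is the bivariate analogue: with $X, U_1, U_2$ independent $\mathcal{N}(0,1)$, the left-hand side is $\mathbb{P}(U_1 - bX \le a,\ U_2 - kX \le h)$. The pair $(U_1 - bX,\, U_2 - kX)$ is centered jointly Gaussian with variances $1+b^2$ and $1+k^2$ and covariance $\mathbb{E}[(U_1 - bX)(U_2 - kX)] = bk$; standardizing each coordinate rescales the thresholds to $a/\sqrt{1+b^2}$ and $h/\sqrt{1+k^2}$ and produces correlation $bk/(\sqrt{1+b^2}\sqrt{1+k^2})$, which is the claimed $\mathrm{BvN}$ expression.

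Finally, (\ref{eq:int2}) is the special case of (\ref{eq:int3}) with $(h,k) = (a,b)$, giving $\int_{-\infty}^{+\infty} \Phi(a+bx)^2 \phi(x)\,dx = \mathrm{BvN}(t,t;\rho)$ with $t = a/\sqrt{1+b^2}$ and $\rho = b^2/(1+b^2)$. To finish I would invoke the classical reduction of the bivariate normal on the diagonal to Owen's T function, $\mathrm{BvN}(t,t;\rho) = \Phi(t) - 2T\!\left(t, \sqrt{(1-\rho)/(1+\rho)}\right)$, which itself follows from Owen's representation of $\mathrm{BvN}$ via T functions (and can be sanity-checked by differentiating both sides in $\rho$ and matching at $\rho = 0$, where it reads $\Phi(t)^2 = \Phi(t) - 2T(t,1)$ since $T(t,1) = \tfrac12\Phi(t)(1-\Phi(t))$). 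Substituting $\rho = b^2/(1+b^2)$ gives $\sqrt{(1-\rho)/(1+\rho)} = 1/\sqrt{1+2b^2}$, yielding the stated formula.

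The computations for (\ref{eq:int1}) and (\ref{eq:int3}) are routine; the only real obstacle is the last step for (\ref{eq:int2}), namely pinning down the diagonal-$\mathrm{BvN}$-to-$T$ identity with the correct constants and the sign/branch conventions for $T$. An alternative that sidesteps this is to cite the relevant entries of \citet{owen_integrals} directly; the derivation above records the reasoning behind them.
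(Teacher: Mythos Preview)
Your proposal is correct. The paper itself does not give any argument for this lemma: it simply records that the three identities are entries 10{,}010.8, 20{,}010.4, and 20{,}010.3 in the table of \citet{owen_integrals}. Your write-up is therefore strictly more informative than the paper's, supplying the standard probabilistic derivation (auxiliary independent standard normals turning each integral into a Gaussian orthant probability) that underlies those entries, and then reducing the diagonal $\mathrm{BvN}$ to Owen's $T$ for (\ref{eq:int2}). The benefit of the paper's approach is brevity and a clear pointer to a canonical reference; the benefit of yours is that it is self-contained and makes the dependence on the covariance structure transparent, which is exactly what is reused later in the proof of Theorem~\ref{thm:post}. Since you also note that one could simply cite \citet{owen_integrals}, your proposal subsumes the paper's treatment.
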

These results are 10,010.8, 20,010.4, and 20,010.3, respectively, from \citet{owen_integrals}.

Throughout the proofs in this section, for notational convenience and clarity we will use the shorthand $f_* = f(\mathbf{x}_*)$ to represent the latent function value at $\mathbf{x}_*$, and will let $\mu_* = \mu(\mathbf{x}_* | \mathcal{D}_n)$ and $\sigma^2_* = \sigma^2(\mathbf{x}_* | \mathcal{D}_n)$ indicate the posterior mean and variance of $f_*$. Thus, $f_* | \mathcal{D}_n \sim \mathcal{N}(\mu_*, \sigma^2_*)$. Table \ref{table:shorthand} provides a complete list of the abbreviated notation used throughout this supplement.

\begin{table}[tbh]
    \centering
    \begin{tabular}{c|c}
    \textbf{Short-hand notation} & \textbf{Definition}  \\\hline
       $f_*$, $f_q$  & $f(\mathbf{x}_*)$, $f(\mathbf{x}_q)$\\
       $\mu_*$, $\mu_q$& $\mu(\mathbf{x}_* | \mathcal{D}_n)$, $\mu(\mathbf{x}_q | \mathcal{D}_n)$\\
       $\sigma_*$, $\sigma_q$& $\sigma(\mathbf{x}_* | \mathcal{D}_n)$, $\sigma(\mathbf{x}_q | \mathcal{D}_n)$\\
       $\sigma_{q*}$ & $\textrm{Cov}[f(\mathbf{x}_q), f(\mathbf{x}_*) | \mathcal{D}_n]$\\
       $a_*$ & $\frac{\mu_*}{\sqrt{1 + \sigma_*^2}}$\\
       $c_*$ & $\frac{1}{\sqrt{1 + 2 \sigma_*^2}}$\\
       $b_*$, $b_q$ &  $\frac{\gamma - \mu_*}{\sigma_*}$, $\frac{\gamma - \mu_q}{\sigma_q}$ \\
       $Z_{q*}$ & $\textrm{BvN} \left( a_*  ,  b_q; \frac{- \sigma_{q*}}{\sigma_q \sqrt{1 + \sigma_*^2}} \right)$ \\
       $Z_{**}$ & $\textrm{BvN} \left( a_*  ,  b_*; \frac{- \sigma_{*}}{\sqrt{1 + \sigma_*^2}} \right)$
       
    \end{tabular}
    \caption{Abbreviated notation used throughout the proofs and other results in this supplement.}
    \label{table:shorthand}
\end{table}

\begin{proof}[Proof of Proposition \ref{prop:zpost}]

\begin{align}\nonumber
    \mathbb{E}[z(\bm{x}_*) | \mathcal{D}_n] &= \mathbb{E}[\Phi(f_*) | \mathcal{D}_n] \\\label{eq:prop1step0}
    &= \int_{-\infty}^{+\infty} \Phi(f_*) \phi( f_*;\mu_*, \sigma^2_*) df_*\\\nonumber\displaybreak[1]
    &= \frac{1}{\sigma_*} \int_{-\infty}^{+\infty} \Phi(f_*) \phi\left( \frac{f_* - \mu_*}{\sigma_*} \right) df_*\\\label{eq:prop1astep1}
    &= \int_{-\infty}^{+\infty} \Phi(\mu_* + \sigma_* \tilde{f}_*) \phi( \tilde{f}_* )  d\tilde{f}_*\\\label{eq:prop1astep2}
    &= \Phi \left( \frac{\mu_*}{\sqrt{1 + \sigma_* ^2}} \right),
\end{align}
where (\ref{eq:prop1astep1}) used the change of variables $\tilde{f}_* = \frac{f_* - \mu_*}{\sigma_*}$, and (\ref{eq:prop1astep2}) used (\ref{eq:int1}). For the variance,
\begin{equation*}
    \textrm{Var}[z(\bm{x}_*) | \mathcal{D}_n] = 
    \mathbb{E}[z(\bm{x}_*)^2 | \mathcal{D}_n] - (\mathbb{E}[z(\bm{x}_*) | \mathcal{D}_n])^2,
\end{equation*}
where, similarly as before,
\begin{align}\nonumber
    \mathbb{E}[z(\bm{x}_*)^2 | \mathcal{D}_n] &= \int_{-\infty}^{+\infty} \Phi(f_*)^2 p(f_* | \mathcal{D}_n) df_*\\\nonumber
    &= \int_{-\infty}^{+\infty} \Phi(\mu_* + \sigma_* \tilde{f}_*)^2 \phi( \tilde{f}_* )  d\tilde{f}_*\\\nonumber
    &= \Phi \left( \frac{\mu_*}{\sqrt{1 + \sigma_*^2}} \right) - 2 T \left( \frac{\mu_*}{\sqrt{1 + \sigma_*^2}}, \frac{1}{\sqrt{1 + 2\sigma_*^2}} \right)
\end{align}
using (\ref{eq:int2}). Letting $a_* = \frac{\mu_*}{\sqrt{1 + \sigma_*^2}}$ and $c_* = \frac{1}{\sqrt{1 + 2 \sigma_*^2}}$ as in Proposition \ref{prop:zpost}, we have that
\begin{equation*}
    \textrm{Var}[z(\bm{x}_*) | \mathcal{D}_n] = \Phi(a_*) - \Phi(a_*)^2 - 2T(a_*, c_*).
\end{equation*}
\end{proof}

\begin{proof}[Proof of Proposition \ref{prop:py1}]
\begin{align*}
    \mathbb{P}(y_* = 1 | \mathcal{D}_n, \mathbf{x}_*) = \int_{-\infty}^{+\infty} \mathbb{P}(y_* = 1 | f_*) p(f_* | \mathcal{D}_n) df_* = \int_{-\infty}^{+\infty} \Phi(f_*) \phi(f_* ; \mu_*, \sigma^2_*) df_*,
\end{align*}
which we have already seen in (\ref{eq:prop1step0}) equals $\Phi(a_*)$.
\end{proof}

For the proof of Theorem \ref{thm:post}, we will introduce additional shorthand notation  $f_q = f(\mathbf{x}_q)$, and as before will let $f_q | \mathcal{D}_n \sim \mathcal{N}(\mu_q, \sigma^2_q)$. We let $\sigma_{q*}$ denote the covariance between $f_*$ and $f_q$. We use the following result on the conditional distribution between $f_*$ and $f_q$.

\begin{lemma}\label{lemma:cond}
Let
\begin{equation*}
\begin{pmatrix}
f_1\\
f_2
\end{pmatrix} \sim \mathcal{N} \left( \begin{pmatrix}
\mu_1\\
\mu_2
\end{pmatrix}, \begin{pmatrix}
\sigma^2_1 & \sigma_{12}\\
\sigma_{21} & \sigma^2_2
\end{pmatrix}  \right).
\end{equation*}
Then, the conditional density for $f_1$ given $f_2 \leq \gamma$ is
\begin{equation*}
    p(f_1 | f_2 \leq \gamma) = \frac{\phi \left(\frac{f_1 - \mu_1}{\sigma_1} \right)}{\sigma_1 \Phi \left(\frac{\gamma - \mu_2}{\sigma_2} \right)} \Phi \left( \frac{\gamma - \mu_2 -  \frac{\sigma_{12}}{\sigma^2_1} (f_1 - \mu_1)}{\sqrt{\sigma_2^2 - \frac{\sigma^2_{12}}{\sigma^2_1}}} \right) 
\end{equation*}
\end{lemma}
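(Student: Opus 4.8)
The plan is to express the density of $f_1$ conditioned on the \emph{event} $\{f_2 \le \gamma\}$ via the elementary identity
\begin{equation*}
p(f_1 \mid f_2 \le \gamma) = \frac{p(f_1)\,\mathbb{P}(f_2 \le \gamma \mid f_1)}{\mathbb{P}(f_2 \le \gamma)},
\end{equation*}
which holds because $\mathbb{P}(f_2 \le \gamma) > 0$, so that the left-hand side is well defined and equals the joint ``density times indicator probability'' renormalized by the probability of the conditioning event. Each of the three factors on the right is available in closed form for a bivariate normal, so the proof reduces to writing them down and simplifying.

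First I would handle the denominator: marginally $f_2 \sim \mathcal{N}(\mu_2, \sigma_2^2)$, so $\mathbb{P}(f_2 \le \gamma) = \Phi\!\left((\gamma - \mu_2)/\sigma_2\right)$. Next, the marginal density of $f_1$ is $p(f_1) = \frac{1}{\sigma_1}\phi\!\left((f_1 - \mu_1)/\sigma_1\right)$, which supplies the prefactor $\phi\!\left((f_1-\mu_1)/\sigma_1\right)/\sigma_1$ in the claimed expression. The remaining factor $\mathbb{P}(f_2 \le \gamma \mid f_1)$ is where the off-diagonal term enters: I would invoke the standard Gaussian conditioning formula, namely that conditional on $f_1$, $f_2$ is normal with mean $\mu_{2\mid 1} = \mu_2 + \frac{\sigma_{12}}{\sigma_1^2}(f_1 - \mu_1)$ and variance $\sigma_{2\mid 1}^2 = \sigma_2^2 - \frac{\sigma_{12}^2}{\sigma_1^2}$. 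Hence $\mathbb{P}(f_2 \le \gamma \mid f_1) = \Phi\!\left((\gamma - \mu_{2\mid 1})/\sigma_{2\mid 1}\right)$, which is precisely the $\Phi(\cdot)$ term in the statement. Substituting the three pieces into the identity above gives the lemma immediately.

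This argument is essentially bookkeeping, and I do not expect a genuine obstacle; the only two points needing care are (i) being explicit that we condition on the event $\{f_2 \le \gamma\}$ rather than on $\{f_2 = \gamma\}$, so that the elementary conditional-probability decomposition applies verbatim, and (ii) recalling the Gaussian regression coefficient $\sigma_{12}/\sigma_1^2$ and residual variance $\sigma_2^2 - \sigma_{12}^2/\sigma_1^2$ correctly. Note that, unlike the other results in this supplement, this lemma requires none of the Owen integral identities — it is instead the structural input that Theorem~\ref{thm:post} will subsequently combine with those integrals to obtain the closed-form look-ahead level set posterior.
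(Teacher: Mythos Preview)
Your proposal is correct and follows essentially the same approach as the paper: apply Bayes' theorem to write $p(f_1 \mid f_2 \le \gamma) = p(f_2 \le \gamma \mid f_1)\, p(f_1) / p(f_2 \le \gamma)$, then evaluate each factor using the marginal of $f_1$, the marginal of $f_2$, and the standard Gaussian conditional $f_2 \mid f_1 \sim \mathcal{N}(\mu_{2\mid 1}, \sigma_{2\mid 1}^2)$. Your commentary on the two points of care and on the lemma not requiring the Owen integrals is accurate and matches the paper's treatment.
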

\begin{proof}
By Bayes' theorem,
\begin{equation}\label{eq:lemma2step1}
p(f_1 | f_2 \leq \gamma) = \frac{p(f_2 \leq \gamma | f_1) p(f_1)}{p(f_2 \leq \gamma)} = \frac{p(f_2 \leq \gamma | f_1) \phi \left(\frac{f_1 - \mu_1}{\sigma_1} \right)}{\sigma_1 \Phi \left(\frac{\gamma - \mu_2}{\sigma_2} \right)}.
\end{equation}
It is well-known that
\begin{equation*}
p(f_2 | f_1 = x) = \mathcal{N}(\mu_{2|1}, \sigma_{2|1})
\end{equation*}
where $\mu_{2|1} = \mu_2 +  \frac{\sigma_{12}}{\sigma^2_1} (x - \mu_1)$ and $\sigma^2_{2|1} = \sigma_2^2 - \frac{\sigma_{12}^2}{\sigma^2_1}$. Thus,
\begin{equation*}
    p(f_2 \leq \gamma | f_1) = \Phi \left( \frac{\gamma - \mu_{2|1}}{\sigma_{2|1}} \right) = \Phi \left( \frac{\gamma - \mu_2 -  \frac{\sigma_{12}}{\sigma^2_1} (f_1 - \mu_1)}{\sqrt{\sigma_2^2 - \frac{\sigma^2_{12}}{\sigma^2_1}}} \right),
\end{equation*}
which when plugged into (\ref{eq:lemma2step1}) produces the result.
\end{proof}

\begin{proof}[Proof of Theorem \ref{thm:post}]
By Bayes' theorem, we have that
\begin{align*}
    \pi(\mathbf{x}_q | \mathcal{D}_{n+1}(\mathbf{x}_*, y_* = 1)) &= \mathbb{P}(f_q \leq \gamma | \mathcal{D}_n, \mathbf{x}_*, y_* = 1)\\
    &= \frac{\mathbb{P}(y_* = 1 | \mathcal{D}_n, \mathbf{x}_*, f_q \leq \gamma) \mathbb{P}(f_q \leq \gamma| \mathcal{D}_n)}{\mathbb{P}(y_* = 1 | \mathcal{D}_n, \mathbf{x}_*)}.
\end{align*}
From Proposition \ref{prop:py1} we know the denominator equals $\Phi(a_*)$ and can easily compute $\mathbb{P}(f_q \leq \gamma| \mathcal{D}_n) = \Phi \left( \frac{\gamma - \mu_q}{\sigma_q} \right) = \Phi(b_q)$, so the only term remaining is $\mathbb{P}(y_* = 1 | \mathcal{D}_n, \mathbf{x}_*, f_q \leq \gamma)$.

\begin{align}\nonumber
    \mathbb{P}(y_* = 1 | \mathcal{D}_n, \mathbf{x}_*, f_q \leq \gamma) &= \int_{-\infty}^{+\infty} \mathbb{P}(y_* = 1 | f_*) p(f_* | \mathcal{D}_n, f_q \leq \gamma) df_* \\\nonumber
    &= \int_{-\infty}^{+\infty} \Phi(f_*) p(f_* | \mathcal{D}_n, f_q \leq \gamma) df_*\\\label{eq:thm1proofstep1}
    &= \frac{1}{\sigma_* \Phi\left( \frac{\gamma - \mu_q}{\sigma_q} \right) } \int_{-\infty}^{+\infty} \Phi(f_*)   \Phi \left( \frac{\gamma - \mu_q -  \frac{\sigma_{q*}}{\sigma^2_*} (f_* - \mu_*)}{\sqrt{\sigma_q^2 - \frac{\sigma^2_{q*}}{\sigma^2_*}}} \right) \phi \left(\frac{f_* - \mu_*}{\sigma_*} \right) df_*\\\nonumber
    &= \frac{1}{\Phi\left( \frac{\gamma - \mu_q}{\sigma_q} \right) } \int_{-\infty}^{+\infty} \Phi(\mu_* + \sigma_* \tilde{f}_*)   \Phi \left( \frac{\gamma - \mu_q -  \frac{\sigma_{q*}}{\sigma_*} \tilde{f}_* }{\sqrt{\sigma_q^2 - \frac{\sigma^2_{q*}}{\sigma^2_*}}} \right) \phi (\tilde{f}_*) d\tilde{f}_*\\\label{eq:thm1proofstep2}
    &= \frac{1}{\Phi\left( \frac{\gamma - \mu_q}{\sigma_q} \right) }  \textrm{BvN}\left(\frac{\mu_*}{\sqrt{1 + \sigma_*^2}}, \frac{\gamma - \mu_q}{\sqrt{\sigma_q^2 - \frac{\sigma^2_{q*}}{\sigma^2_*}}\sqrt{1 +  \frac{\sigma^2_{q*}} {\sigma_*^2 \sigma_q^2 - \sigma^2_{q*}}  }}; \frac{ \frac{-\sigma_* \sigma_{q*}}{\sqrt{\sigma_*^2 \sigma_q^2 - \sigma^2_{q*}}}}{\sqrt{1 + \sigma_*^2}\sqrt{1 +  \frac{\sigma^2_{q*}} {\sigma_*^2 \sigma_q^2 - \sigma^2_{q*}}  }} \right)\\\nonumber
    &= \frac{1}{\Phi\left( \frac{\gamma - \mu_q}{\sigma_q} \right) }  \textrm{BvN}\left(\frac{\mu_*}{\sqrt{1 + \sigma_*^2}}, \frac{\gamma - \mu_q}{\sigma_q}; \frac{-\sigma_{q*}}{ \sigma_q \sqrt{1 + \sigma_*^2}} \right).
\end{align}
Here (\ref{eq:thm1proofstep1}) used Lemma \ref{lemma:cond}, and (\ref{eq:thm1proofstep2}) used (\ref{eq:int3}) with $a=\mu_*$, $b=\sigma_*$, $h = \frac{\gamma - \mu_q}{\sqrt{\sigma_q^2 - \frac{\sigma^2_{q*}}{\sigma^2_*}}}$, and $k = \frac{-\sigma_{q*}}{\sqrt{\sigma_*^2 \sigma_q^2 - \sigma^2_{q*}}}$. Combining this term with the other terms, and using the convenient definitions of $a_*$ and $b_q$, we have that

\begin{equation*}
    \pi(\mathbf{x}_q | \mathcal{D}_{n+1}(\mathbf{x}_*, y_* = 1)) = \frac{1}{\Phi(a_*)} \textrm{BvN}\left(a_*, b_q; \frac{-\sigma_{q*}}{ \sigma_q \sqrt{1 + \sigma_*^2}} \right).
\end{equation*}
For the $y_*=0$, case, 
\begin{equation*}
    \pi(\mathbf{x}_q | \mathcal{D}_{n+1}(\mathbf{x}_*, y_* = 0)) = \frac{\mathbb{P}(y_* = 0 | \mathcal{D}_n, \mathbf{x}_*, f_q \leq \gamma) \mathbb{P}(f_q \leq \gamma| \mathcal{D}_n)}{\mathbb{P}(y_* = 0 | \mathcal{D}_n, \mathbf{x}_*)}
\end{equation*}
The terms are easily computed from what we have already found: $\mathbb{P}(y_* = 0 | \mathcal{D}_n, \mathbf{x}_*) = 1 - \Phi(a_*)$, and $\mathbb{P}(y_* = 0 | \mathcal{D}_n, \mathbf{x}_*, f_q \leq \gamma) = 1 - \frac{1}{\Phi(b_q)} \textrm{BvN}\left(a_*, b_q; \frac{-\sigma_{q*}}{ \sigma_q \sqrt{1 + \sigma_*^2}} \right)$. Plugging these in yields the result in the Theorem.
\end{proof}

\section{ACQUISITION EXPRESSIONS}
Here we provide the full expression used to compute each look-ahead acquisition function, using the same posterior short-hand notation as in the previous section.

\paragraph{Global SUR}
\begin{align*}
    \alpha&_\textrm{GlobalSUR}(\mathbf{x}_*) = \\
    &\sum_{\mathbf{x}_q \in \mathcal{G}}  \left( \min(\Phi(b_q), 1 - \Phi(b_q)) - \min \left( Z_{q*}, \Phi(a_*) - Z_{q*}\right) -  \min \left( \Phi(b_q) - Z_{q*}, \Phi(-a_*) - \Phi(b_q) - Z_{q*} \right) \right). 
\end{align*}

\paragraph{Localized SUR}
\begin{equation*}
    \alpha_\textrm{LocalSUR}(\mathbf{x}_*) =  \min(\Phi(b_*), 1 - \Phi(b_*)) - \min \left( Z_{**}, \Phi(a_*) - Z_{**}\right) -  \min \left( \Phi(b_*) - Z_{**}, \Phi(-a_*) - \Phi(b_*) - Z_{**} \right) . 
\end{equation*}

\paragraph{Localized MI}
\begin{equation*}
    \alpha_\textrm{LocalMI}(\mathbf{x}_*) = H_b(\Phi(b_*)) - \Phi(a_*) H_b \left( \frac{Z_{**}}{\Phi \left(a_* \right)} \right) + \Phi(-a_*) H_b \left( \frac{\Phi\left( b_* \right) - Z_{**}}{\Phi \left(-a_* \right)} \right).
\end{equation*}

\paragraph{Global MI}
\begin{equation*}
    \alpha_\textrm{GlobalMI}(\mathbf{x}_*) = \sum_{\mathbf{x}_q \in \mathcal{G}}  \left( H_b(\Phi(b_q)) - \Phi(a_*) H_b \left( \frac{Z_{q*}}{\Phi \left(a_* \right)} \right) + \Phi(-a_*) H_b \left( \frac{\Phi\left( b_q \right) - Z_{q*}}{\Phi \left(-a_* \right)} \right) \right).
\end{equation*}

\paragraph{EAVC}
\begin{equation*}
    \alpha_\textrm{EAVC}(\mathbf{x}_*) = \Phi(a_*) \left| \sum_{\mathbf{x}_q \in \mathcal{G}}  \left( \Phi(b_q) - \frac{Z_{q*}}{\Phi \left(a_* \right)}  \right)  \right| + \Phi(-a_*) \left| \sum_{\mathbf{x}_q \in \mathcal{G}}  \left( \Phi(b_q) - \frac{\Phi\left( b_q \right) - Z_{q*}}{\Phi \left(-a_* \right)}  \right)  \right|.
\end{equation*}

\section{ADDITIONAL BENCHMARK EXPERIMENT RESULTS}

\subsection{Synthetic Functions}

The synthetic functions were designed to explore a variety of input and output patterns that are present in real LSE problems, and in psychophysics problems in particular. A common experimental paradigm in psychophysics is the \textit{two-alternative forced choice} (2AFC) method in which the participant is given two options and forced to select one. The CSF study in Section \ref{sec:realworld}, and illustrated in Fig. \ref{fig:real_stims}, is an example of a 2AFC task. For 2AFC tasks, the minimum probability of being correct is 0.5, because participants are forced to make a choice and in the absence of a detectable stimulus will guess randomly. Thus the probability output space is $[0.5, 1]$, and the goal in these experiments is typically to find the $\theta=0.75$ threshold, as is done in our experiment. However, there are other experimental designs, and other Bernoulli LSE tasks, in which the probability of success will vary from $0$ to $1$, and so to show that the methods are not limited to the 2AFC setting, we designed the Binarized Hartmann6 function to have probabilities from $0$ to $1$, and there set the target threshold to $\theta=0.5$. We now give the functional form for each synthetic function.

\subsubsection{Binarized Hartmann6 Function}
The Binarized Hartmann6 function was a binarization of the modified Hartmann 6-d function used by \citet{lyu21}. Their modified Hartmann 6-d function is:
\begin{equation*}
h(\mathbf{x}) =  1 - \sum_{i=1}^4 \alpha_i \exp \left(-\sum_{j=1}^6 A_{ij}(x_j - P_{ij})^2 \right)
\end{equation*}
with $\alpha = [2.0, 2.2, 2.8, 3.0]$,
\begin{equation*}
A = 
\begin{pmatrix}
8 & 3 & 10 & 3.5 & 1.7 & 6 \\
0.5 & 8 & 10 & 1.0 & 6 & 9 \\
3 & 3.5 & 1.7 & 8 & 10 & 6 \\
10 & 6 & 0.5 & 8 & 1.0 & 9
\end{pmatrix}, \textrm{ and }
P = 10^{-4} 
\begin{pmatrix}
1312 & 1696 & 5569 & 124 & 8283 & 5886 \\
2329 & 4135 & 8307 & 3736 & 1004 & 9991 \\
2348 & 1451 & 3522 & 2883 & 3047 & 6650 \\
4047 & 8828 & 8732 & 5743 & 1091 & 381
\end{pmatrix}.
\end{equation*}
We used $f(\mathbf{x}) = 3h(\mathbf{x}) - 2$ as the ground-truth latent function, so that Bernoulli samples were simulated according to $\Phi(f(\mathbf{x}))$. The input space for this problem is as in the classic Hartmann6 problem, $\mathbf{x} \in [0, 1]^6$. The output probabilities, $\Phi(f(\mathbf{x}))$, span $[0, 1]$, so for this problem the target threshold was set as $\theta = 0.5$.

\subsubsection{Psychophysical Discrimination, 2-d}
The latent function for this problem is computed as
\begin{equation*}
    f(x_1, x_2) = \frac{1 + x_2}{0.05 + 0.4 x_1^2 (0.2 x_1 - 1)^2}.
\end{equation*}
The input domain is $x_1, x_2 \in [-1, 1]$, and the output probabilities span $[0.5, 1]$, with the target threshold $\theta=0.75$.

\subsubsection{Psychophysical Discrimination, 8-d}
For $\mathbf{x} = [x_1, \ldots, x_8]$, we define
\begin{equation*}
c(\mathbf{x}) = \left(\frac{x_3}{2} \left(1 - \cos\left(\frac{3}{5} \pi x_2 x_8  + x_7 \right) \right) + x_4\right) \left(2 - x_6 \left( 1 + \sin\left(\frac{3}{10}\pi x_2 x_8 + x_7\right) \right) \right) - 1.
\end{equation*}
Then, the Bernoulli probability for the 8-d Psychophysical Discrimination function is computed as
\begin{equation*}
    z(\mathbf{x}) = \frac{1}{2}  + \frac{1}{2} \Phi \left( \frac{x_1 - c(\mathbf{x})}{x_5(2 + c(\mathbf{x}))} \right). 
\end{equation*}
The input space is $\mathbf{x} \in [-1, 1]^8$ and, as in the 2-d discrimination function, the output probabilities span [0.5, 1], so the target threshold was set to 0.75.

\subsection{Surrogate Model and Acquisition Optimization}

All methods and all experiments used the same surrogate model: A typical variational classification GP \citep{gpclass_vi} with 100 inducing points and an RBF kernel. Kernel hyperpriors were taken as the defaults from the Botorch package \citep{botorch}. Inducing points were selected by applying k-means to the observations. In each iteration of active learning, the model was updated with the new observation by refitting the variational distribution and kernel hyperparameters. In most iterations, the refitting was warm-started by beginning the fitting at the previous values. To avoid getting stuck in a local optimum, and as is common in Bayesian optimization, every 10\textsuperscript{th} iteration the re-fitting was done from scratch with a refreshed set of inducing points.

To avoid conflating acquisition quality with the ability to optimize the acquisition function, all acquisition functions were optimized in the same manner, using the gradient-based acquisition optimization utilities from the Botorch package. For the global acquisition functions, the reference set $\mathcal{G}$ was taken as a quasi-random (scrambled Sobol) set of 500 points, which was changed for each iteration. Straddle in the probability space was evaluated as a Monte Carlo acquisition function \citep{botorch} due to the lack of a differentiable implementation of Owen's T function.

Each benchmark run initialized with an initial design of 10 quasi-random points, after which the surrogate model was fit and all subsequent iterations used active sampling with the specified acquisition function. Throughout the active sampling, performance metrics were computed using a quasi-random test-set of 1000 points, which was sampled independently from anything done for the modeling or acquisition optimization.

\subsection{Additional Evaluation Metrics}

The results in the main text showed performance evaluated using Brier score, a strictly proper scoring rule. Proper scoring rules are an appropriate evaluation metric for this problem space because they assess not only the quality of the model point prediction, but also the calibration of posterior uncertainty. An alternative metric for evaluating classifiers in particular is the expected classification error, defined as $p(1-y) + (1-p)y$ for a classifier that provides $p$ as the probability that $y=1$ (\textit{i.e.}, that a point is below threshold), and $y$ the actual outcome (\textit{i.e.}, if the point was actually below threshold). Fig. \ref{fig:classerr} show the results of the benchmark experiments when evaluated using expected classification error. The conclusions of the experiments do not change under this alternative evaluation metric.

\begin{figure*}[tb]
    \includegraphics{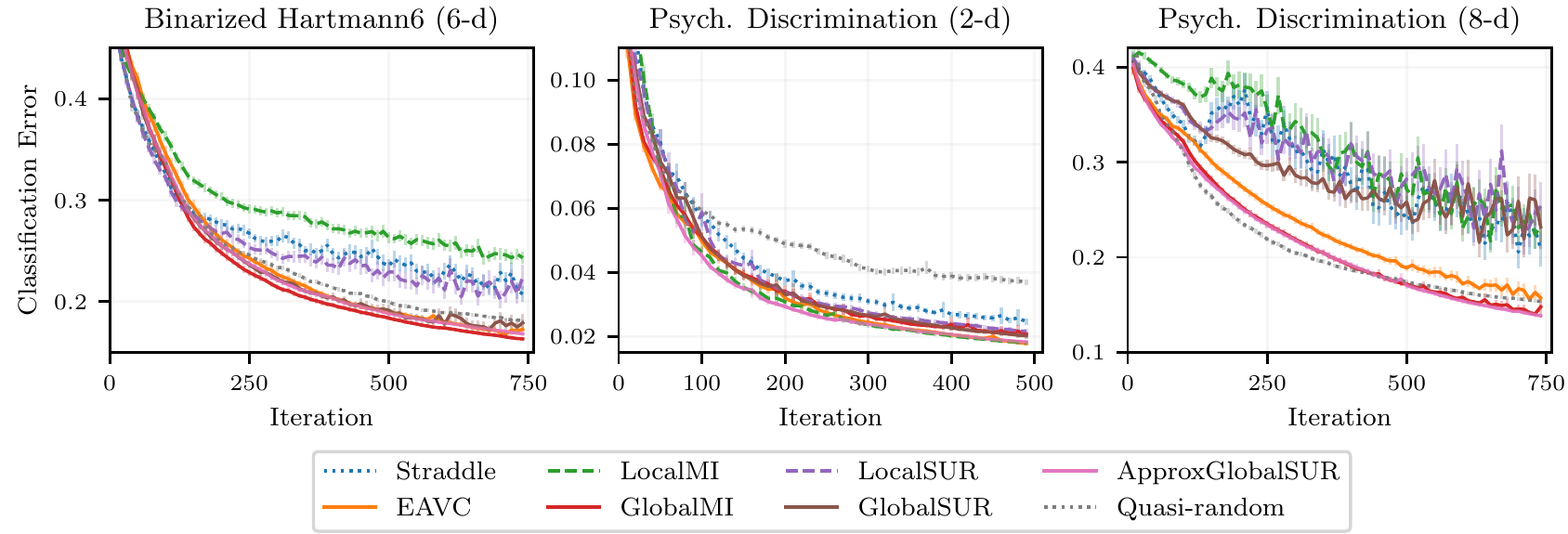}
    \vspace{-15pt}
    \caption{\textbf{Benchmark results: alternative metric}. Expected classification error (lower is better) of the GP surrogate model as a function of active sampling iteration, for the same benchmark results as in Fig. \ref{fig:results-sim}. As before, shown is the mean and two standard errors over 200 replications. On the high-dimensional problems, the localized look-ahead methods (LocalMI, LocalSUR) performed significantly worse than the quasi-random baseline. Global methods GlobalMI and EAVC were consistently the best.
    }
    \label{fig:classerr}
\end{figure*}

Fig. \ref{fig:gentime} shows the amount of wall time required to optimize the acquisition function for a model fit to 250 observations. Global acquisition functions required more wall-time to compute due to the global reference set $\mathcal{G}$, however with a per-iteration time between half a second and a second for the complete optimization when given multiple threads, they were well within the speed required for experiments with human participants. 

\begin{figure*}[tb]
    \includegraphics{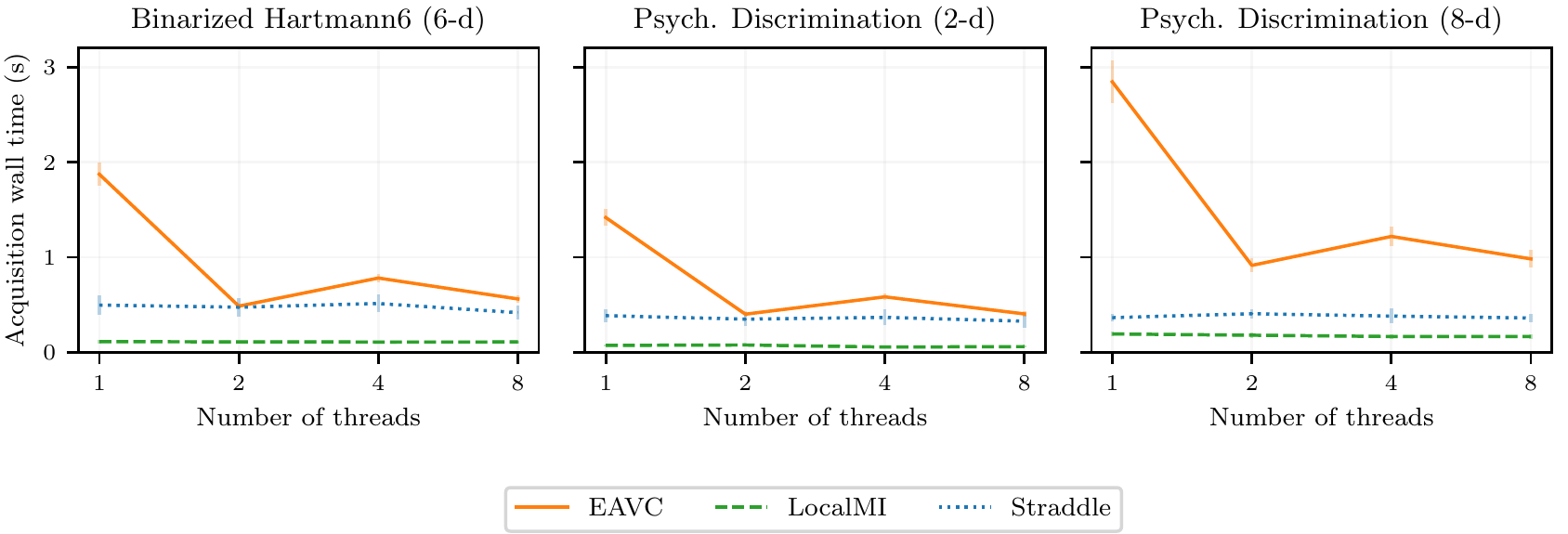}
    \vspace{-10pt}
    \caption{\textbf{Benchmark results: wall time}. Wall time required for acquisition optimization, in seconds, based on a surrogate model fit to 250 observations drawn from a Sobol sequence. This evaluation was done on an AWS EC2 \texttt{c6l.18xlarge} instance and is the average of 20 replications. Global acquisition methods required less than a second per iteration to identify the next point for evaluation.
    }
    \label{fig:gentime}
\end{figure*}

\begin{figure*}[tb]
    \includegraphics{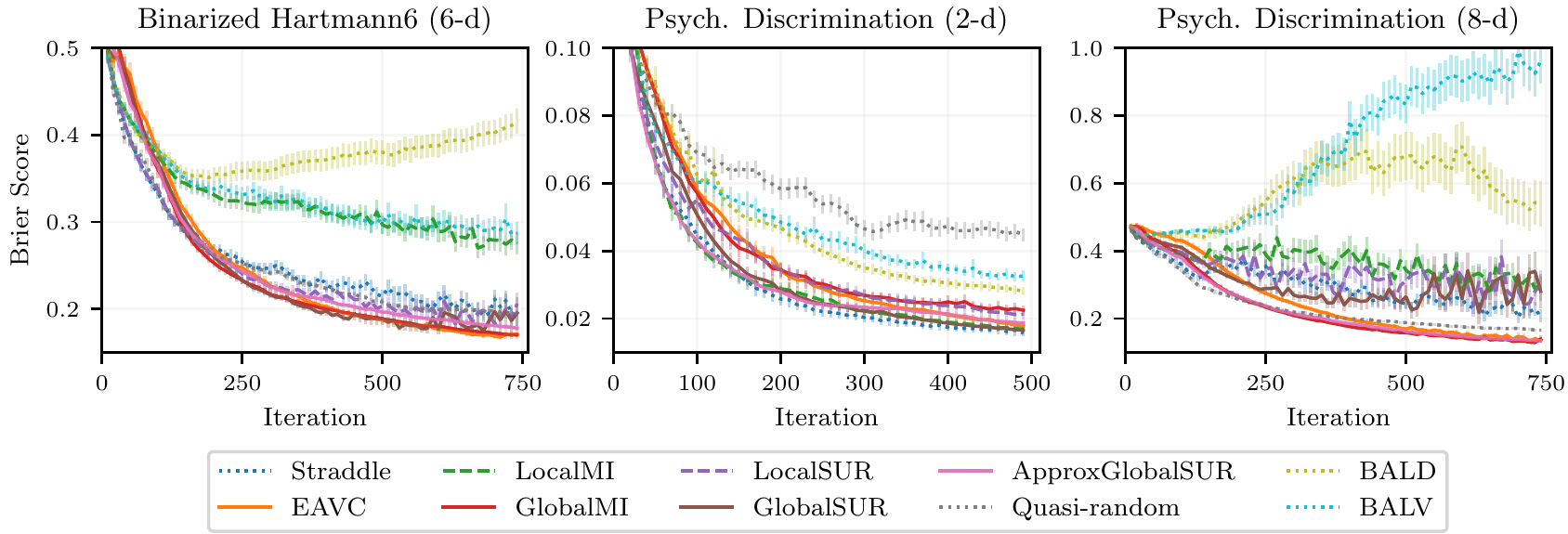}
    \vspace{-10pt}
    \caption{\textbf{Benchmark results: BALD}. The same benchmark results as in Fig. \ref{fig:results-sim}, with the addition of global active sampling methods BALD and BALV. Global active sampling methods can waste samples reducing uncertainty in areas far from the threshold, and here they performed worse than quasi-random search for LSE.
    }
    \label{fig:bald}
\end{figure*}

\subsection{Comparison to BALD and BALV}

The main text discusses how the use of global active sampling methods such as BALD \citep{Houlsby2011} or Bayesian active learning by variance (BALV) \citep{song2015} can be inefficient for level-set estimation because they may focus sampling effort on reducing variance in areas that are not close to the threshold. Fig. \ref{fig:bald} shows empirically that this is the case, by evaluating BALD and BALV on the same benchmark problems used in the main text. For LSE in high dimensions, BALD and BALV performed significantly worse than quasi-random search, which further emphasizes the importance of developing acquisition functions specifically for LSE.

\begin{figure*}[tb]
    \includegraphics{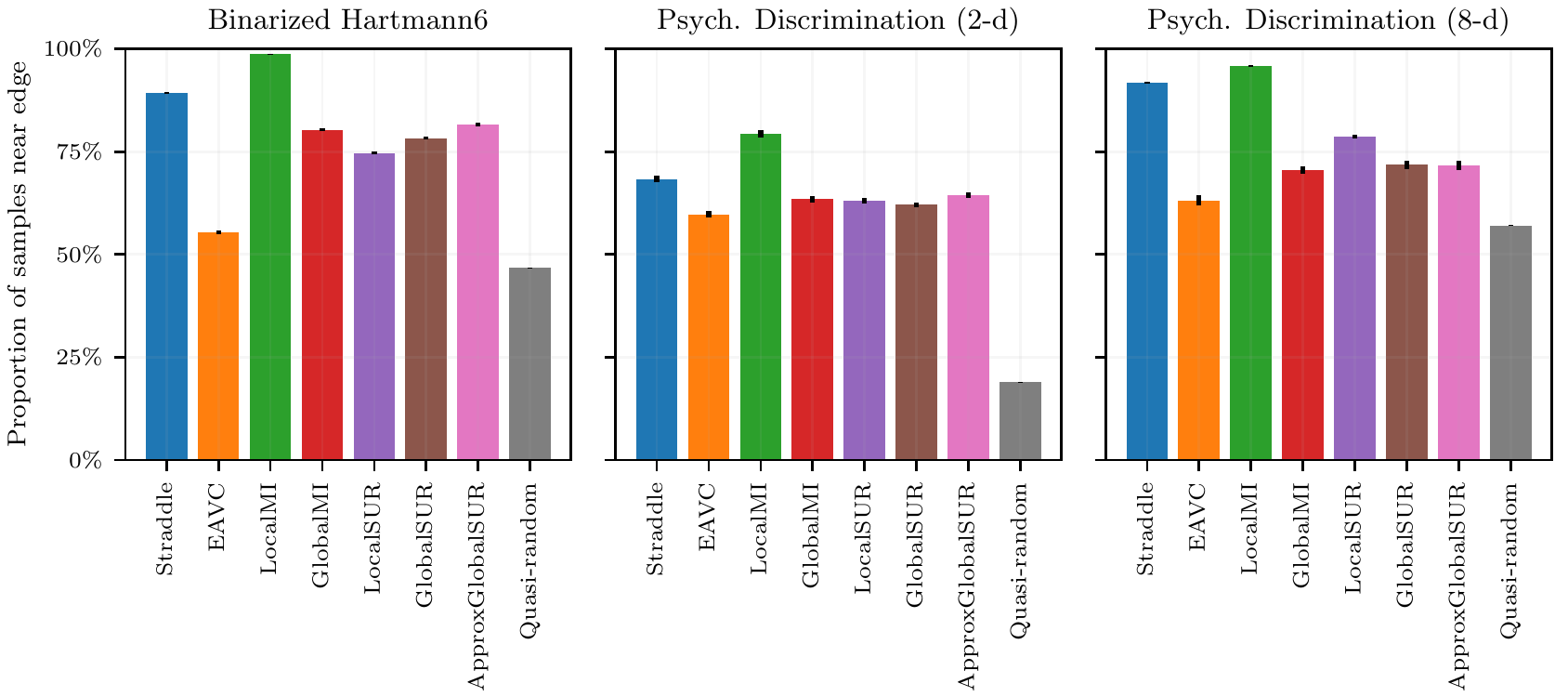}
    \caption{\textbf{Benchmark results: edge sampling}. For the benchmark results from Fig. \ref{fig:results-sim}, the proportion of active learning iterations in which the evaluated point was within 5\% of an edge of the search space. Like the Straddle acquisition, localized look-ahead methods did significantly more edge sampling than global look-ahead methods in high-dimensions.
    }
    \label{fig:edgesampling}
\end{figure*}

\subsection{An Analysis of Edge Sampling Behavior}

We highlighted in the main text that a source of poor performance for localized look-ahead methods is their tendency to oversample edge locations. This behavior is shown empirically in Fig. \ref{fig:edgesampling}, using the benchmark results from Section \ref{sec:experiments}. For each benchmark run, we evaluated the proportion of active learning samples that were within 5\% of the search space range of an edge. For instance, on the Binarized Hartmann6 problem where the domain is $[0, 1]^6$, this was the proportion of points with an element less than $0.05$ or greater than $0.95$. In high dimensions, the localized look-ahead methods LocalMI and LocalSUR, along with the straddle acquisition, sampled significantly more edge locations than the global look-ahead methods, or quasi-random sampling. LocalMI was particularly focused on the edges, with 99\% edge samples for Binarized Hartmann6. EAVC had the least tendency to sample edges, and in high dimensions had an edge sampling rate comparable to quasi-random search.

\begin{figure*}[tbh]
    \includegraphics{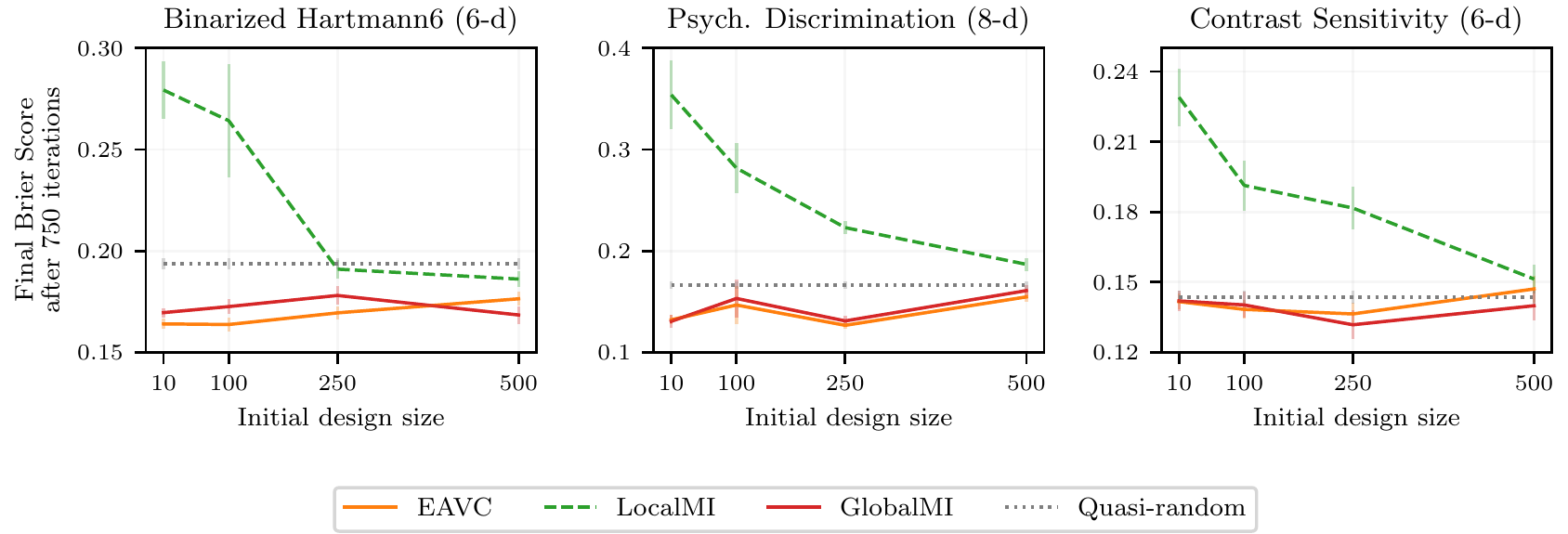}
    \vspace{-15pt}
    \caption{\textbf{Sensitivity study: initial design}. The Brier score (mean and two standard errors) after 750 total iterations (initial design plus active sampling) as a function of the size of the initial design. Global methods performed better when given more iterations of active sampling (smaller initialization). LocalMI benefited from a larger initialization, but never achieved the best performance of global methods on the Binarized Hartmann6 and Psychophysical Discrimination (8-d) problems.
    }
    \label{fig:initial_sens}
\end{figure*}
\begin{figure*}[tbh]
    \includegraphics{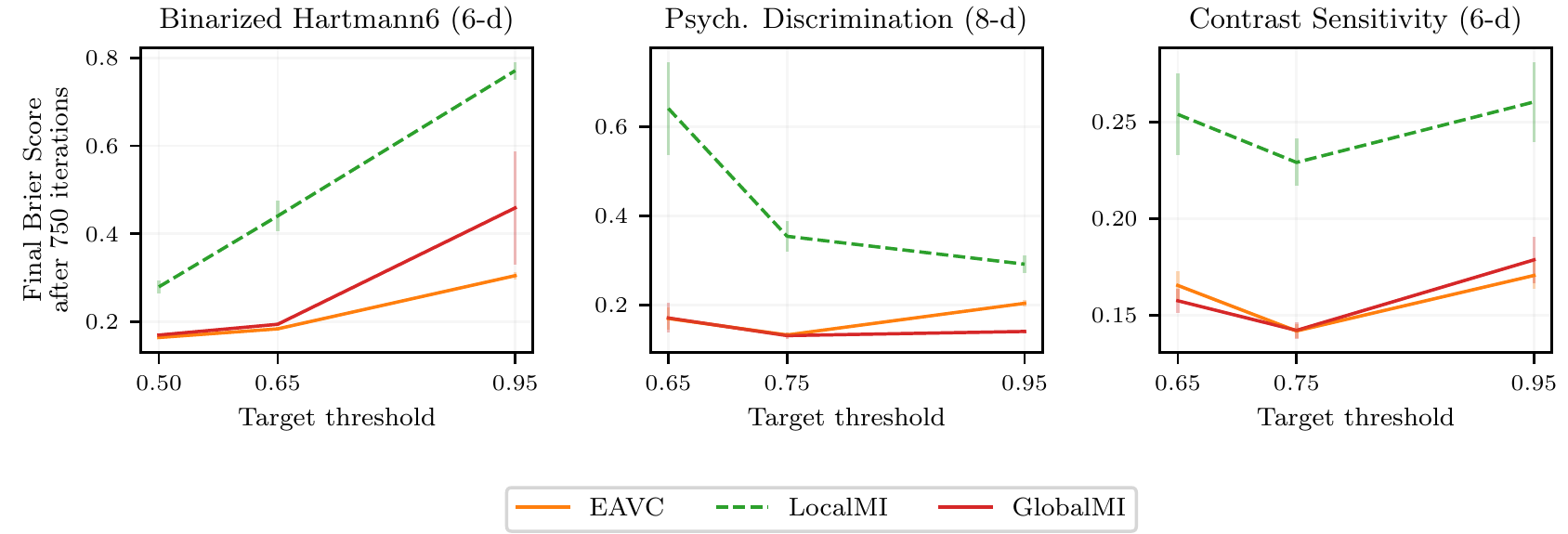}
    \vspace{-15pt}
    \caption{\textbf{Sensitivity study: target threshold}. The Brier score (mean and two standard errors) after 750 iterations when changing the target threshold. Changing the target threshold effectively changes the LSE problem. While some target thresholds pose more challenging tasks than others, across this range of settings the global look-ahead methods continued to generally perform best.
    }
    \label{fig:thresh_sens}
\end{figure*}

\subsection{Sensitivity Study}

The look-ahead acquisition functions described and developed in this paper do not have any hyperparameters that must be tuned. The straddle acquisition function has the $\beta$ hyperparameter, and \citet{lyu21} have done a sensitivity study of various strategies for selecting $\beta$ and found that none consistently performed well. Here we study sensitivity to two aspects of the experiments: the initial design, and the target threshold.

\subsubsection{Initial Design Sensitivity}
Each benchmark run in the results of Section \ref{sec:experiments} was initialized with 10 quasi-random points. We saw in Fig. \ref{fig:edgesampling} that localized look-ahead methods over-sampled edge locations, and hypothesized that a larger initial design would provide a better initial global surrogate model, under which the high degree of exploitation in the localized look-ahead methods could actually be beneficial.

Fig. \ref{fig:initial_sens} shows the final Brier score after 750 total iterations, for increasingly large initial designs. Note that the total number of iterations was fixed at 750, so that the initial design of size 10 had 740 active sampling iterations, while that of size 500 had only 250 active sampling iterations. The sensitivity study focuses on a subset of methods (EAVC, LocalMI, and GlobalMI) that were most characteristic. As hypothesized, LocalMI benefited significantly from having a larger initial design, and the improved global surrogate that a larger initial design entails. Particularly on the Binarized Hartmann6 problem, LocalMI performed significantly worse than quasi-random sampling with the small initial design of 10 points, but with an initial design of 250 (out of 750) points, was able to do slightly better than quasirandom.  While LocalMI did perform better with a larger initial design on all three high-dimensional problems, it still did not match the performance of the global methods. The global methods GlobalMI and EAVC, in contrast, performed best for the smallest initial designs, which permitted the most active sampling. Generally, performance with the global methods was robust to the size of the initial design up to 250 points (one third of the total budget).

\subsubsection{Target Threshold Sensitivity}

Fig. \ref{fig:thresh_sens} shows how the final Brier score varies with the target threshold for the problem. Changing the target threshold significantly alters the LSE problem, by focusing the active sampling in a different part of the search space. LSE performance thus changes when the target threshold is changed, however Fig. \ref{fig:thresh_sens} shows that across this large set of target level sets, the global look-ahead methods continued to consistently be the best.

\section{DETAILS OF REAL-WORLD EXPERIMENT}
For the real-world CSF experiment, the stimulus feature space was 8-dimensional and we collected 1000 stimuli generated from a Sobol sequence over that stimulus space. We used a GP classification surrogate model as ground truth. The model was fit using an RBF kernel over six of the stimulus features to create a 6-d problem space. Two stimulus properties, angular dimensions of eccentricity and orientation, were left unmodeled. This effectively added noise to the surrogate function and increased the difficulty of level-set estimation. The model was a variational classification GP with training locations used for inducing points. The GP mean was used for the ground-truth latent $f$ from which Bernoulli responses were simulated.


\end{document}